\documentclass[lettersize,journal]{IEEEtran}
\usepackage{amsmath,amsfonts}
\usepackage{algorithmic}
\usepackage{algorithm}
\usepackage{array}
\usepackage[caption=false,font=normalsize,labelfont=sf,textfont=sf]{subfig}
\usepackage{textcomp}
\usepackage{stfloats}
\usepackage{url}
\usepackage{verbatim}
\usepackage{graphicx}
\usepackage{cite}
\usepackage{xcolor}
\usepackage{xspace}
\usepackage{booktabs} 
\usepackage{multirow}
\usepackage{enumitem}
\usepackage{subcaption}
\usepackage{amsmath,amsfonts,bm,amsthm}

\def\eqref#1{equation~\ref{#1}}
\def\1{\bm{1}}

\newtheorem{definition}{Definition}
\newtheorem{assumption}{Assumption}
\newtheorem{lemma}{Lemma}
\newtheorem{proposition}{Proposition}

\def\vmu{{\bm{\mu}}}

\def\va{{\bm{a}}}

\def\vf{{\bm{f}}}

\def\vx{{\bm{x}}}

\def\mA{{\bm{A}}}

\DeclareMathAlphabet{\mathsfit}{\encodingdefault}{\sfdefault}{m}{sl}
\SetMathAlphabet{\mathsfit}{bold}{\encodingdefault}{\sfdefault}{bx}{n}

\def\gI{{\mathcal{I}}}

\makeatletter
\DeclareRobustCommand\onedot{\futurelet\@let@token\@onedot}
\def\@onedot{\ifx\@let@token.\else.\null\fi\xspace}

\def\eg{\emph{e.g}\onedot} 
\def\ie{\emph{i.e}\onedot}

\makeatother

\definecolor{beaublue}{rgb}{0.8, 0.9, 1.0}% {0.9, 0.95, 0.9}
\definecolor{blackish}{rgb}{0.2, 0.2, 0.2}

\usepackage{tcolorbox}

\begin{document}

\title{Test-Time Self-Adaptive Conditioning for Stable Audio-Driven Talking-Head Generation
%Test-Time Self-Adaptive Conditioning for Audio-Driven Talking-Head Generation
}

\author{Zhicheng Zhang$^\dagger$, Lei Wang$^\dagger$, Yu Zhang, Yongsheng Gao
\thanks{
$\dagger$ are co-first authors with equal contribution.

Zhicheng Zhang and Yu Zhang are with the School of Business, University of New South Wales (UNSW), Australia (emails: zhicheng.zhang2@unsw.edu.au; m.yuzhang@unsw.edu.au).

Lei Wang and Yongsheng Gao are with the School of Engineering and Built Environment, Griffith University, Australia (emails: l.wang4@griffith.edu.au; yongsheng.gao@griffith.edu.au).
% Zhicheng Zhang and Lei Wang contributed equally to this work and are co-first authors.
 
% Zhicheng Zhang is with the School of Business, University of New South Wales (UNSW), Australia (email: zhicheng.zhang2@unsw.edu.au).

% Lei Wang is with the School of Engineering and Built Environment, Griffith University, Australia, and with Data61/CSIRO (email: l.wang4@griffith.edu.au).

% Yu Zhang is with the School of Business, University of New South Wales (UNSW), Australia (email: m.yuzhang@unsw.edu.au).

% Yongsheng Gao is with the School of Engineering and Built Environment, Griffith University, Australia (email: yongsheng.gao@griffith.edu.au).
% Yu Zhang and Yongsheng Gao are the corresponding authors.
}}

% % The paper headers
% \markboth{Journal of \LaTeX\ Class Files,~Vol.~14, No.~8, August~2021}%
% {Shell \MakeLowercase{\textit{et al.}}: A Sample Article Using IEEEtran.cls for IEEE Journals}

% \IEEEpubid{0000--0000/00\$00.00~\copyright~2021 IEEE}
% Remember, if you use this you must call \IEEEpubidadjcol in the second
% column for its text to clear the IEEEpubid mark.

\maketitle

\begin{abstract}
Audio-driven talking-head generation has achieved remarkable progress with recent models such as AniTalker, FLOAT, and Sonic. Despite their success, most existing approaches rely on a single static reference image to condition the entire video generation process at inference stage. This static conditioning paradigm often creates a mismatch between fixed identity features and dynamically evolving facial motion,  leading to identity drift, temporal inconsistency, and degraded perceptual quality.
We introduce Test-Time Self-Adaptive Conditioning (TT-SAC), a parameter-free inference framework that enables pretrained talking-head generators to adapt their conditioning representations during inference without retraining, gradient updates, or additional supervision. Instead of treating the reference portrait as immutable, TT-SAC composes the generator with its encoder in a feedback loop: the generator’s own outputs are re-encoded to construct a refined conditioning representation that better aligns with the temporal dynamics of the synthesized sequence. A single adaptation step approximates a self-consistent equilibrium of the generative process, stabilizing identity and motion across time.
We further provide theoretical analysis showing that test-time conditioning adaptation reduces feature variance and improves generative stability under mild Lipschitz assumptions, while exhibiting a principled bias-variance tradeoff that governs the optimal strength of adaptation. Extensive experiments on state-of-the-art talking-head generators and benchmark datasets demonstrate consistent improvements in lip-sync accuracy, temporal coherence, identity preservation, and perceptual fidelity.
TT-SAC offers a model-agnostic and training-free strategy for enhancing generative video models, establishing test-time conditioning adaptation as an effective mechanism for stabilizing audio-driven portrait animation.
\end{abstract}

\begin{IEEEkeywords}
Talking-head generation, test-time conditioning adaptation, self-adaptive conditioning, % feature self-consistency, 
portrait animation
\end{IEEEkeywords}

% \begin{figure}[tbp]
% \centering
% \subfloat[Static reference.]{%
% \includegraphics[width=0.33\linewidth]{figures/Picture1.pdf}
% \label{fig:baseline}}
% % \hfill
% \subfloat[Feature drift.]{%
% \includegraphics[width=0.33\linewidth]{figures/Picture2.pdf}
% \label{fig:tsne}}
% % \hfill
% \subfloat[TT-SAC feedback.]{%
% \includegraphics[width=0.33\linewidth]{figures/Picture3.pdf}
% \label{fig:scra}}

% \caption{Motivation for Self-Consistent Reference Aggregation (TT-SAC): mitigating identity drift.  
% (a) A static reference causes temporal artifacts and identity drift in audio-driven portrait generation.  
% (b) In feature space, the static reference (black cross) leads to a drifting trajectory (red), while TT-SAC \textcolor{blue}{(blue)} maintains compact, self-consistent identity features.  
% (c) The proposed TT-SAC introduces a lightweight feedback loop that stabilizes generation and preserves identity consistency over time.}
% \label{fig:moti}
% \end{figure}

\begin{figure}[tbp]
\centering
\subfloat[Identity inconsistency.]{%
\includegraphics[width=\linewidth]{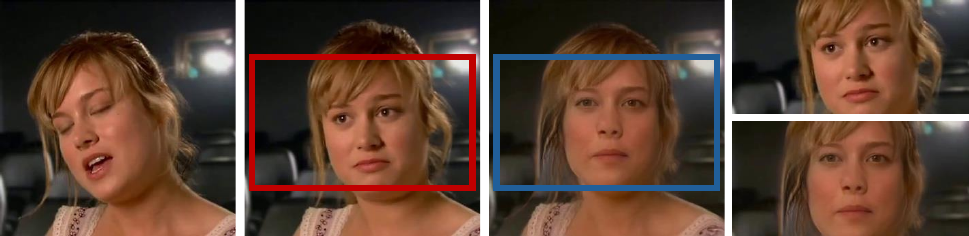}
}\\
\subfloat[Accumulated temporal drift.]{%
\includegraphics[width=\linewidth]{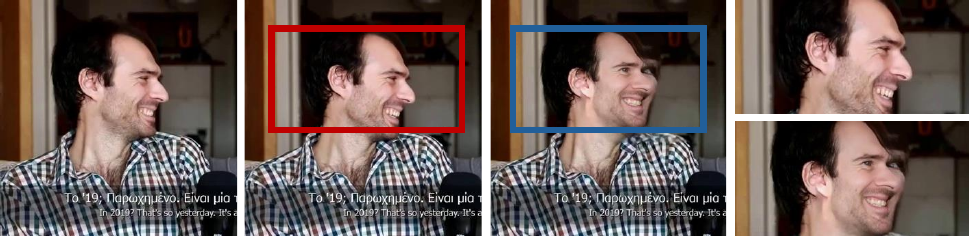}
}\\
\subfloat[Fine-grained visual artifacts.]{%
\includegraphics[width=\linewidth]{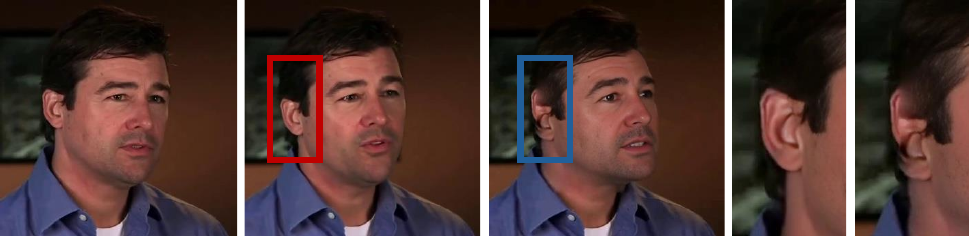}
}
\caption{Common failure cases in existing audio-driven talking-head generation. Columns from left to right show the reference frame, the real frame, the generated frame, and zoomed comparisons. Red and blue boxes mark regions of interest in the real and generated frames, respectively, which are magnified in Column 4. Existing models exhibit identity inconsistency, long-sequence drift, and local structural artifacts (\eg, ear fragmentation), where generated frames deviate from the reference more than real frames, an important unsolved problem to be addressed in this paper.
}
\label{fig:moti}
\end{figure}

\section{Introduction}

\IEEEPARstart{A}{udio-driven} talking-head generation~\cite{siarohin2019first, chen2019hierarchical, prajwal2020lip, xu2024hallo, cui2024hallo2, cui2025hallo3, cui2025hallo4, cheng2024dawn, wei2024aniportrait, guo2024liveportrait, xu2024vasa, zhang2026talkinghead} aims to synthesize photorealistic facial animations from a single portrait image and an input speech signal. Recent advances, including SadTalker~\cite{zhang2023sadtalker}, AniTalker~\cite{liu2024anitalker}, FLOAT~\cite{ki2025float}, and Sonic~\cite{ji2025sonic}, have significantly improved visual realism and lip synchronization by leveraging expressive motion representations and generative architectures such as diffusion and flow-based models.
Most of the existing methods follow the same inference paradigm: a \emph{single static reference image} is encoded once to produce an identity representation that conditions the entire video generation process while the pretrained generator remains fixed.

While effective, this paradigm has a fundamental limitation. A single portrait image provides only a partial observation of the underlying facial identity, typically captured under a fixed pose, expression, and illumination.
In contrast, the generated video involves rich and dynamically evolving facial motions driven by speech.
As generation progresses, the static conditioning representation may become increasingly misaligned with the evolving motion dynamics, often leading to identity drift, temporal inconsistency, or local facial artifacts.
Figure~\ref{fig:moti} illustrates representative examples where generated frames deviate from the target identity or exhibit abnormal facial distortions. These observations suggest that conditioning representations in generative video models should not remain static at inference stage.

Based on above observation, we revisit the conditioning mechanism of talking-head generation and ask the question: \emph{Can a pretrained generator refine its conditioning representation during inference without retraining or modifying model parameters?}
In this work, we propose \textit{Test-Time Self-Adaptive Conditioning (TT-SAC)}, a training-free inference framework that adapts the conditioning representations of pretrained talking-head generators using their own generated outputs. % The key idea is to compose the generator with its encoder during inference, forming a feedback loop that refines the identity representation through self-consistency. 
Instead of treating the reference portrait as immutable, TT-SAC first generates an initial video using the pretrained model, then re-encodes early generated frames to extract additional identity-aware features. These features are aggregated to construct a refined conditioning representation, which is subsequently used for a second generation pass. This generator-encoder composition effectively transforms a conventional one-pass generator into a self-adaptive system capable of correcting conditioning mismatch during inference.

Our approach follows a \emph{self-consistency principle}: generated frames provide additional observations of the target identity under diverse facial motions, revealing identity characteristics that may not be fully captured by the original portrait image.
% By incorporating this information through encoder-based feature aggregation, TT-SAC approximates a self-consistent equilibrium between identity representation and generated motion dynamics.
By incorporating identity cues extracted from these generated frames through encoder-based feature aggregation, TT-SAC approximates a self-consistent equilibrium between identity representation and generated motion dynamics.
Importantly, TT-SAC requires \emph{no gradient updates, retraining, or architectural modifications}.
It operates purely at inference stage and can be seamlessly applied to a wide range of pretrained talking-head generators.

\begin{figure*}[tbp]
\centering
\subfloat[Static identity conditioning.]{%
\includegraphics[width=0.23\textwidth]{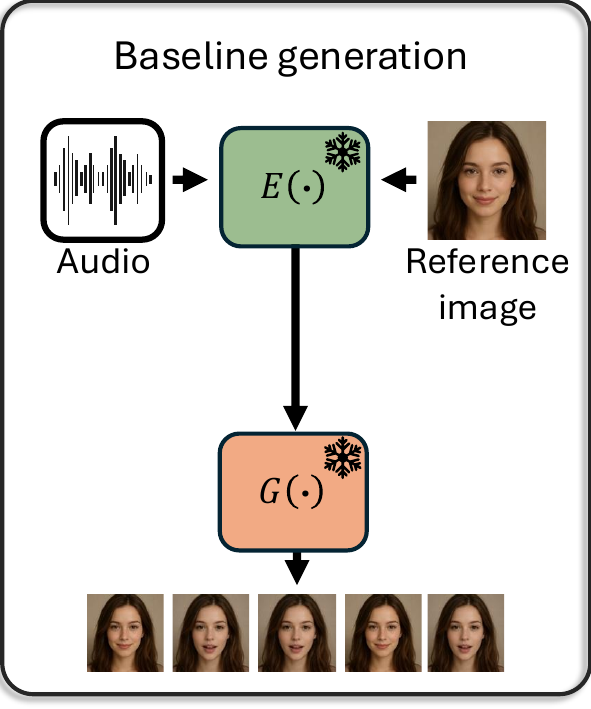}
}
\hfill
\subfloat[Test-time conditioning refinement.]{%
\includegraphics[width=0.28\textwidth]{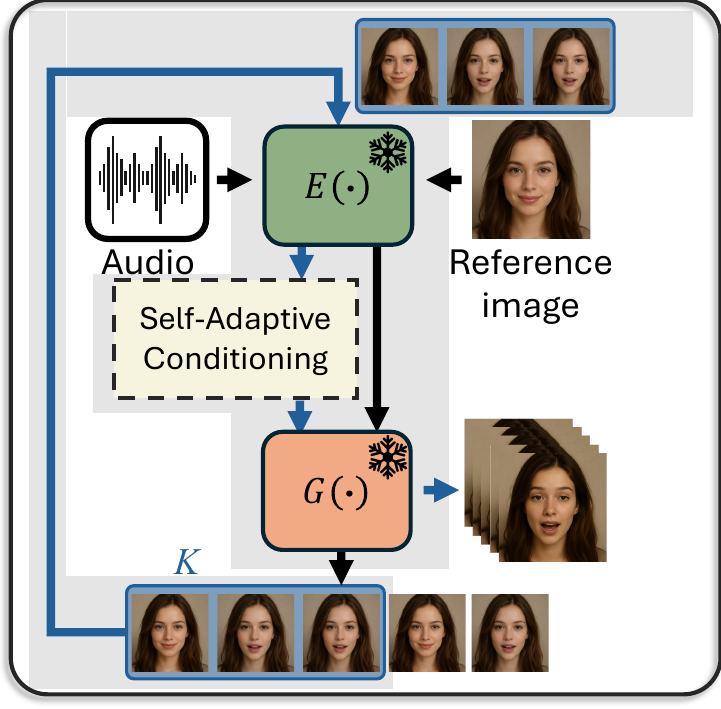}
}
\hfill
\subfloat[Self-consistent generator-encoder feedback.]{%
\includegraphics[width=0.40\textwidth]{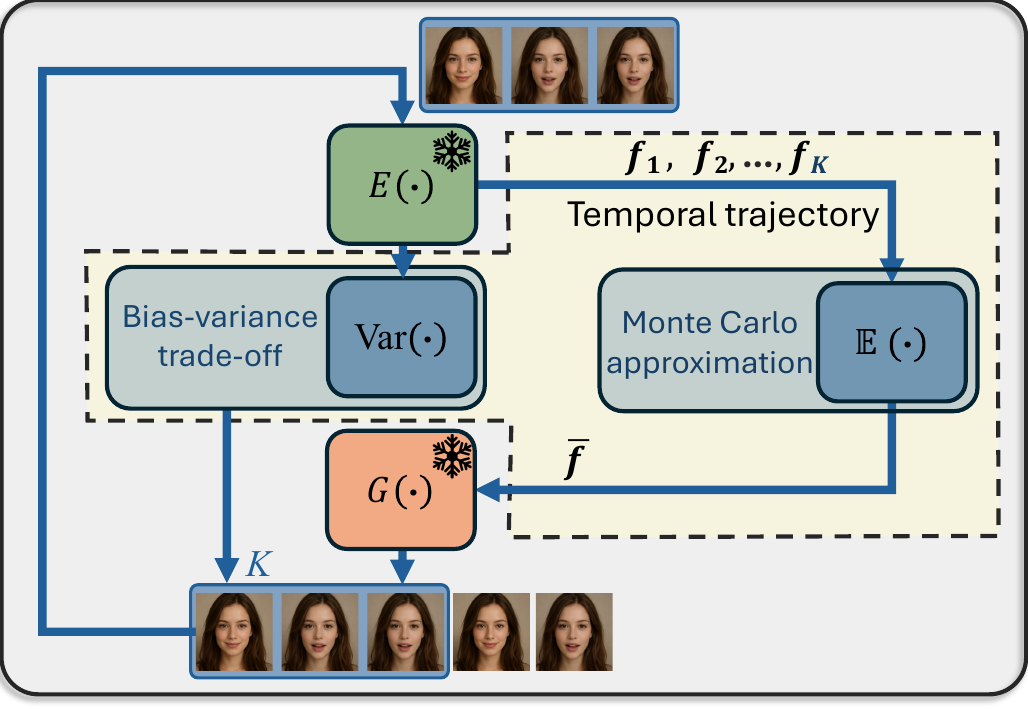}
}
\caption{Test-Time Self-Adaptive Conditioning (TT-SAC).
Conventional talking-head generators use a single identity embedding extracted from the reference image to condition the entire sequence. As facial motion and pose evolve, this static representation may become misaligned with generated frames, causing identity drift and artifacts. TT-SAC introduces a test-time mechanism that refines the conditioning feature without retraining. After an initial generation pass, several frames are re-encoded and their identity features are aggregated to produce an updated conditioning representation. This generator-encoder feedback approximates a self-consistent conditioning feature under the generative dynamics. The aggregation acts as a Monte Carlo estimate of a fixed point of the generator-encoder operator, reducing latent variance and improving identity stability.
}
\label{fig:scra_pipeline}
\end{figure*}

We provide theoretical insights into why conditioning adaptation improves generation stability. We show that the refinement process reduces feature variance and stabilizes generative dynamics under mild Lipschitz assumptions. Furthermore, the adaptation procedure can be interpreted as a stochastic fixed-point iteration, revealing a bias-variance tradeoff that characterizes the optimal level of conditioning refinement.
Extensive experiments on state-of-the-art pretrained generators, including FLOAT, Sonic, SadTalker, JoyVASA, and AniTalker, demonstrate that TT-SAC consistently improves lip synchronization accuracy, temporal coherence, identity preservation, and perceptual quality.
% The results demonstrate that test-time conditioning adaptation constitutes a simple yet powerful mechanism for stabilizing generative video models.
The results demonstrate that test-time conditioning adaptation provides a principled mechanism for improving the stability of generative video models.
Our contributions are summarized as follows:
\renewcommand{\labelenumi}{\roman{enumi}.}
\begin{enumerate}[leftmargin=0.4cm]
\item We introduce \textit{Test-Time Self-Adaptive Conditioning (TT-SAC)}, a new inference paradigm that enables pretrained talking-head generators to adapt their conditioning representations at test time without retraining, fine-tuning, or gradient updates.
\item A novel \textit{generator-encoder compositional inference framework} is proposed that refines identity conditioning through self-consistency between generated frames and conditioning features, transforming a static one-pass generator into a self-adaptive generation process.
\item We provide \textit{theoretical analysis} showing that conditioning adaptation reduces feature variance and stabilizes generative dynamics, and interpret the adaptation step as a stochastic fixed-point iteration with a principled bias-variance tradeoff.
\item Extensive experiments on state-of-the-art talking-head generators demonstrate consistent improvements, % in lip synchronization, temporal coherence, identity preservation, and perceptual quality, 
highlighting the generality and effectiveness of test-time conditioning adaptation.
\end{enumerate}

\section{Related Work}

Our work lies at the intersection of audio-driven talking-head generation and test-time adaptation for generative models. 
The related work is organised into three categories: (i) audio-driven talking-head generation, (ii) temporal consistency and motion modeling, and (iii) inference-time optimization in generative models. 
We clarify how our proposed TT-SAC differs fundamentally from each category of these research.

\textbf{Audio-driven talking-head generation.}
Audio-driven portrait animation has advanced rapidly in recent years. 
Early works focused on driving lip motion using facial landmarks or parametric 3D models~\cite{kumar2017obamanet, kim2018deep, suwajanakorn2017synthesizing, karras2017audio, chung2016out, jalalifar2018speech, yi2020audio}. 
More recent approaches directly synthesize photorealistic video from a single portrait and speech signal~\cite{thies2016face2face, wang2021one, ma2023dreamtalk, wang2025fantasytalking, kong2025let, ma2025playmate, chen2025echomimic}. 
For instance, SadTalker~\cite{zhang2023sadtalker} predicts 3D motion coefficients to animate stylized faces, while Sonic~\cite{ji2025sonic} disentangles intra- and inter-clip audio cues to improve long-range temporal coherence. 
FLOAT~\cite{ki2025float} employs flow-matching in a learned motion latent space for temporally consistent and efficient generation.

These methods primarily focus on architectural innovation, motion representation learning, and large-scale training. 
In contrast, our work does not modify model architectures or training strategies. 
% Instead, we revisit the \emph{test-time conditioning paradigm} itself.
% Instead, we introduce a test-time conditioning perspective.
We identify conditioning mismatch between static reference features and dynamic motion generation as an overlooked limitation and propose adapting the conditioning representation at test time without retraining.

\textbf{Temporal consistency and motion modeling.}
Ensuring temporal coherence, \eg, stable identity, smooth motion transitions, and consistent head pose, is a central challenge in portrait animation. 
Existing approaches address this through explicit temporal modeling, such as motion latent flows~\cite{drobyshev2022megaportraits}, long-range audio encoders~\cite{ji2025sonic}, disentangled motion representations~\cite{tan2025disentangle}, or multi-frame feature fusion~\cite{ding2025learnable, liu2024multimodal, guan2023stylesync}. 
These methods embed temporal structure directly into network design and require dedicated training procedures.

Our approach differs fundamentally: TT-SAC introduces no additional temporal modules and requires no retraining. 
Rather than explicitly modeling motion dynamics, we refine the conditioning representation using the generator’s own outputs. 
This implicit adaptation aligns identity features with the generated motion manifold and improves stability in a plug-and-play manner. 
Therefore, TT-SAC is complementary to existing temporal modeling architectures and can be applied to pretrained systems without architectural modification.

\textbf{Inference-time optimization in generative models.}
Inference-time optimization has attracted growing interest in diffusion and generative modeling, including self-conditioning, latent optimization, iterative refinement, and consistency-based sampling~\cite{ho2020denoising, song2020denoising, lipman2022flow, lu2022dpm}. 
These techniques typically focus on improving sample quality through modified sampling trajectories, additional optimization steps, or guidance mechanisms. 
However, they generally operate within the generative process itself and do not reconsider the conditioning representation.

In contrast, TT-SAC addresses a distinct problem: \emph{conditioning adaptation}. 
We introduce a feedback mechanism in which generated frames are re-encoded to construct an updated conditioning feature, enforcing a self-consistency principle between conditioning and generated outputs. 
This perspective differs from temporal smoothing, latent refinement, or gradient-based inference-time optimization, as it modifies neither network parameters nor sampling dynamics. 
Instead, it adapts the conditioning input through a single self-consistent update step.
To our knowledge, this is the first work in talking-head generation to explicitly formulate test-time conditioning adaptation as a self-consistency problem and demonstrate its effectiveness on state-of-the-art models.

% Our method complements architecture-centric advances of the first group, motion/dynamics modeling of the second group, and fills a gap in inference-time methods for audio-driven talking-head generation. 
% We provide theoretical motivation (variance reduction, fixed-point interpretation, bias-variance tradeoff) and practical plug-and-play utility, which distinguishes our work from prior art.

% Scalars are denoted by standard letters (\eg, $x$), vectors by bold lowercase letters (\eg, $\vx$), matrices by bold uppercase letters (\eg, $\mX$), and tensors by calligraphic symbols (\eg, $\gX$).

%for stabilizing pretrained audio-driven talking-head generators. 

\section{Method}

We present Test-Time Self-Adaptive Conditioning (TT-SAC; see Fig. \ref{fig:scra_pipeline} for an overview), a principled, test-time framework designed to stabilize pretrained audio-driven talking-head generators.
Rather than modifying network parameters or introducing new temporal modules, TT-SAC adapts the \emph{conditioning representation} at test time based on a stability criterion derived from the generative dynamics itself.

Unlike heuristic post-processing or smoothing strategies, TT-SAC originates from a formal analysis of conditioning instability in temporally evolving generative models. 
We first characterize this instability, then derive a fixed-point formulation for stable conditioning, and finally present a practical estimator that realizes this formulation without gradient updates or retraining. Below, we first introduce our notation.

\textbf{Generator-encoder composition.} Let $\gI_r$ denote a reference portrait image and $\mA \!=\! \{\va_t\}_{t=1}^{T}$ an input audio sequence of length $T$. 
Modern pretrained talking-head generators first encode the reference image into a latent identity representation
$\vf_r \!=\! E(\gI_r)$, where $E(\cdot)$ is an encoder that extracts the subject-specific identity features. 
The generator $G(\cdot)$ then produces a video sequence
$\{\hat{\gI}_t\}_{t=1}^{T} \!=\! G(\vf_r, \mA)$, 
with each frame $\hat{\gI}_t$ corresponding to audio input $\va_t$. 

Importantly, the latent feature $\vf_r$ remains fixed throughout inference. 
To formalize the interplay between generation and re-encoding, we define the \textit{generator-encoder composition}:
\begin{equation}
    (E \circ G)(\vf, \mA) = E(G(\vf, \mA)),
\end{equation}
which maps a conditioning feature $\vf$ and motion input $\mA$ to the latent identity feature obtained after one generation-encoding cycle. This composition serves as the core operator for our fixed-point analysis.
The \emph{static conditioning assumption} implicitly presumes that a single embedding $\vf_r$ can fully capture the subject's appearance across all poses, expressions, and dynamic motions. 
In practice, this assumption can lead to identity drift and temporal inconsistencies, especially when facial motions or expressions vary significantly. 

In Sec. \ref{sec:tt-sac} we describe the TT-SAC algorithm, Sec. \ref{sec:self_consistency} formalizes identity self-consistency, and Sec. \ref{sec:theory} provides theoretical insights on stability and variance reduction.

\subsection{Test-Time Self-Adaptive Conditioning}
\label{sec:tt-sac}

\textbf{Conditioning instability in dynamic generation.} 
Even with a fixed reference feature, the identity features of generated frames can evolve over time due to changes in pose, expression, and micro-motion. 
Let $\vf_t =  
(E \circ G)(\vf_r, \mA)_t$, $t = 1,\dots,T$,
denote the encoded features of the generated frames, forming a temporal trajectory $\{\vf_t\}_{t=1}^{T}$ in latent space. 
Ideally, a perfectly stable reference feature would satisfy $\vf_r = \frac{1}{T} \sum_{t=1}^{T} \vf_t$ or more generally, 
    $\vf_r = \mathbb{E}_t[\vf_t]$,
\ie, the mean feature of the generated sequence matches the reference. 
In practice, this rarely holds: feature drift accumulates along the generator-encoder composition, resulting in identity deviations and reduced temporal coherence.

This motivates a stability requirement: \emph{a conditioning feature should be consistent with the average features of the sequence it generates}. 
Formally, let $\vf \in \mathbb{R}^d$ denote a candidate identity feature. Given motion input $\mA$,  
a feature $\vf^*$ is \emph{stable} if it satisfies the \emph{self-consistency condition} $\vf^* = \frac{1}{T} \sum_{t=1}^{T} (E \circ G)(\vf^*, \mA)_t 
$ or equivalently, 
$\vf^* = \mathbb{E}_t [(E \circ G)(\vf^*, \mA)_t]$. 
In other words, when conditioning on $\vf^*$, the generated frames, after being encoded back into feature space, produce the same average feature representation $\vf^*$. 
This defines a \emph{self-consistency fixed point} of $E \circ G$,  
providing a principled target for stabilizing identity under dynamic motion.

\textbf{Fixed-point conditioning formulation.}
We define the conditioning operator 
$\mathcal{T}: \mathbb{R}^d \rightarrow \mathbb{R}^d$, 
\begin{equation}
    \mathcal{T}(\vf) = \mathbb{E}_t\big[(E \circ G)(\vf, \mA)_t\big].
    \label{eq:tf}
\end{equation}
A \emph{stable conditioning feature} $\vf^*$ is a fixed point of this operator:
$\vf^* = \mathcal{T}(\vf^*)$.
TT-SAC can thus be interpreted as a fixed-point iteration on the generator-encoder composition, aiming to find $\vf^*$ that aligns the conditioning feature with the statistics of the generated sequence.
As shown in Proposition~\ref{prop:scc} (Sec.~\ref{sec:self_consistency}), the conditioning feature satisfies self-consistency.

Rather than assuming that the original reference embedding $\vf_r$ is already stable, TT-SAC approximates a feature satisfying $\vf^* \approx \mathcal{T}(\vf^*)$, 
thereby producing a self-consistent identity embedding under the generator-encoder dynamics.

\textbf{Practical estimation via Monte Carlo approximation.} 
The conditioning operator (Eq. \ref{eq:tf}) involves an expectation over the generated sequence, which is generally intractable to compute exactly. 
We therefore approximate it using a finite-sample Monte Carlo estimator.

Specifically, given the reference feature $\vf_r$, we evaluate the generator-encoder composition on $K$ generated frames and form the empirical estimate:
\begin{equation}
    \widehat{\mathcal{T}}(\vf_r)
    = \frac{1}{K} \sum_{t=1}^{K} (E \circ G)(\vf_r, \mA)_t.
\end{equation}
This yields an aggregated feature:
\begin{equation}
    \bar{\vf} = \widehat{\mathcal{T}}(\vf_r),
\end{equation}
which approximates $\mathcal{T}(\vf_r)$, with variance decreasing as $K$ increases. 
Each encoded frame is a sample from the generator-encoder dynamics; averaging them reduces motion-induced fluctuations in the latent space.
% Each term $(E \circ G)(\vf_r, \mA)_t$ can be interpreted as a sample produced by the dynamical system induced by the generator-encoder composition, and temporal averaging over these encoded features mitigates motion-induced fluctuations in the latent space. 

Thus, $\bar{\vf}$ provides a stochastic estimate of a self-consistent conditioning feature. 
We then perform a single refinement step:
\begin{equation}
    \vf_r \leftarrow \bar{\vf},
\end{equation}
corresponding to one empirical fixed-point iteration 
$\vf_r \mapsto \widehat{\mathcal{T}}(\vf_r)$. 
This update moves conditioning feature toward a stable solution without retraining or gradient-based optimization.

\textbf{A bias-variance trade-off.}  
The number of sampled frames $K$ determines the statistical accuracy of the Monte Carlo estimator $\widehat{\mathcal{T}}(\vf_r)$. From standard Monte Carlo theory \cite{metropolis1949monte}, averaging multiple samples reduces the variance of the empirical mean. In the ideal case of independent features, the variance scales inversely with the number of samples:
\begin{equation}
\mathrm{Var}(\bar{\vf}) = \frac{1}{K} \mathrm{Var}\big((E \circ G)(\vf_r,\mA)_t\big).
\end{equation}
Increasing $K$ therefore suppresses estimation noise, yielding a more stable empirical estimate of $\mathcal{T}(\vf_r)$ and, by Lemma~\ref{lemma:var}, more stable generator outputs.

In practice, the generated feature sequence is generally non-stationary: $\vf_t = (E \circ G)(\vf_r,\mA)_t$ may drift over time due to evolving pose, expression, and micro-motion. Averaging over too many frames can thus introduce \emph{motion-induced bias}, as later features systematically deviate from the initial identity statistics.  
This phenomenon is formalized by the bias-variance decomposition in Proposition \ref{prop:bias-var} (Sec. \ref{sec:theory}), where the expected squared deviation of $G(\bar{\vf},\mA)$ % from $G(\vmu,\mA)$ 
separates into two components:  
\begin{enumerate} % : $\|J_G(\vmu)(\vmu_K-\vmu)\|^2$,
    \item \textit{Bias} captures systematic drift in the aggregated feature due to temporal non-stationarity.  % $\mathrm{tr}\big(J_G(\vmu)\, \mathrm{Cov}(\bar{\vf})\, J_G(\vmu)^\top\big)$, 
    \item \textit{Variance} captures random fluctuations due to finite sampling.
\end{enumerate}

Consequently, the number of sampled frames $K$ directly governs a classical bias-variance trade-off: increasing $K$ reduces variance but amplifies bias due to motion-induced drift. The optimal number of frames can be expressed abstractly as
\begin{equation}
K^* = \arg\min_K 
\left(
\frac{\sigma^2}{K} + \mathrm{Bias}^2(K)
\right),
\label{eq:k}
\end{equation}
where $\sigma^2$ is the per-frame feature variance and $\mathrm{Bias}(K)$ quantifies the systematic deviation introduced by temporal evolution over the first $K$ frames.  
Small values of $K$ are sufficient to reduce variance while limiting motion-induced bias, consistent with this theoretical perspective. 

This analysis formalizes the intuitive trade-off discussed in Sec. \ref{sec:self_consistency}, showing that Monte Carlo averaging in TT-SAC approximates a stable fixed-point feature while controlling both statistical fluctuations and temporal drift.

\textbf{Why conditioning-level adaptation?} 
TT-SAC operates directly in the conditioning space, modifying the identity embedding rather than the generator parameters or output frames. 
This choice is principled: the instability arises from a mismatch between the fixed conditioning feature and the identity statistics induced by the generator-encoder composition. 
Adapting the conditioning feature therefore addresses the root cause of drift. 
Compared to gradient-based inference-time optimization, TT-SAC avoids backpropagation through multi-step generative processes, eliminating additional memory overhead, optimization instability, and the risk of altering pretrained identity priors. 
Unlike pixel-level temporal smoothing, which post-processes outputs, TT-SAC influences the generator trajectory itself by updating the latent conditioning input. 

The framework is model-agnostic and can apply broadly to diffusion-based, flow-based, or implicit keypoint architectures, provided the conditioning pathway and identity encoder are accessible. By operating at inference stage in conditioning space, TT-SAC implements a gradient-free, self-consistent adaptation mechanism that preserves pretrained weights while improving temporal consistency and identity fidelity. It can be applied not only to the identity (appearance) stream but also to motion or other driving signals when available, stabilizing multi-stream outputs without retraining or architectural modifications.

We next provide a formal analysis explaining why this simple averaging step reduces variance, improves stability, and approximates a fixed-point iteration.

\subsection{Identity Self-Consistency}
\label{sec:self_consistency}

The fixed-point formulation of TT-SAC can be interpreted as enforcing \emph{identity self-consistency} between the conditioning feature and the generated frames.

\begin{assumption}[Identity feature consistency]
Let $E(\cdot)$ denote the identity encoder.  
For images of the same subject, the encoded features concentrate 
around a subject-specific mean representation $\vmu$, \ie,
$\vf = E(\gI) = \vmu + \boldsymbol{\epsilon}$,
where $\mathbb{E}[\boldsymbol{\epsilon}] = 0$ and 
$\mathrm{Cov}(\boldsymbol{\epsilon})$ is bounded.
\end{assumption}

Under this assumption, the encoded features of generated frames
$\vf_t$ ($t \!=\! 1,\dots,T$), 
can be viewed as noisy samples of the identity representation 
induced by the conditioning feature $\vf$ under motion sequence $\mA$.
We define an \emph{identity self-consistency objective}:
% \begin{equation}
$L(\vf) \!=\! \mathbb{E}_t \big[ \| E(G(\vf, \mA)_t) \!-\! \vf \|^2 \big]$,
% \end{equation}
which quantifies the discrepancy between the conditioning feature 
and the identity features observed in the generated sequence.

\begin{proposition}[Self-consistent conditioning]
Any feature $\vf^*$ satisfying the fixed-point condition
$\vf^* = \mathcal{T}(\vf^*) = \mathbb{E}_t \big[(E \circ G)(\vf^*, \mA)_t\big]$
is a stationary point of the identity self-consistency objective $L(\vf)$, 
up to first-order approximation.
\label{prop:scc}
\end{proposition}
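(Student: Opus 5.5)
We compute the gradient of $L(\vf)=\mathbb{E}_t\big[\|\vf_t(\vf)-\vf\|^2\big]$ directly, writing $\vf_t(\vf)=(E\circ G)(\vf,\mA)_t$, and then show that the term surviving at a fixed point is the one dropped "up to first-order approximation." We assume, as in the Lipschitz setting used later, that $E\circ G$ is differentiable in $\vf$ with Jacobian $J_t(\vf)=\partial \vf_t/\partial \vf\in\mathbb{R}^{d\times d}$. Exchanging differentiation with the finite average $\mathbb{E}_t=\frac1T\sum_t$, the chain rule gives
\begin{equation*}
\nabla L(\vf) = 2\,\mathbb{E}_t\big[(J_t(\vf)-I)^\top(\vf_t(\vf)-\vf)\big]
= -2\,\big(\mathcal{T}(\vf)-\vf\big) + 2\,\mathbb{E}_t\big[J_t(\vf)^\top(\vf_t(\vf)-\vf)\big].
\end{equation*}
At a fixed point $\vf^*=\mathcal{T}(\vf^*)$, the first term vanishes exactly, since $\mathcal{T}(\vf^*)-\vf^*=\mathbb{E}_t[\vf_t(\vf^*)]-\vf^*=0$.

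The substance of the proof is to show that the remaining term $R(\vf^*)=\mathbb{E}_t[J_t(\vf^*)^\top \boldsymbol{\delta}_t]$, with residuals $\boldsymbol{\delta}_t=\vf_t(\vf^*)-\vf^*$, is of second order. By the fixed-point condition, $\mathbb{E}_t[\boldsymbol{\delta}_t]=0$. I would split the Jacobian as $J_t=\bar J+\Delta J_t$ with $\bar J=\mathbb{E}_t[J_t]$. This gives
\begin{equation*}
R(\vf^*)=\bar J^\top\mathbb{E}_t[\boldsymbol{\delta}_t]+\mathbb{E}_t[\Delta J_t^\top\boldsymbol{\delta}_t]=\mathbb{E}_t[\Delta J_t^\top\boldsymbol{\delta}_t].
\end{equation*}
So the leftover is a covariance between the frame-wise fluctuations of the Jacobian and the feature residuals.

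Under the Identity feature consistency assumption, the residuals are the zero-mean noise $\boldsymbol{\epsilon}$ around the subject mean $\vmu$ (here $\vf^*\approx\vmu$). If $J_t$ does not vary across frames to first order, or is uncorrelated with $\boldsymbol{\epsilon}$, then $\Delta J_t$ is $O(\|\boldsymbol{\epsilon}\|)$ at most. By Cauchy--Schwarz, $\|R\|\le (\mathbb{E}_t\|\Delta J_t\|^2)^{1/2}(\mathbb{E}_t\|\boldsymbol{\delta}_t\|^2)^{1/2}$, which is quadratic in the fluctuation scale. The gradient therefore vanishes up to first order, which is the claim.

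The main obstacle is justifying that this cross term is genuinely second order. This is precisely where the "up to first-order approximation" qualifier lives. I would make the assumption explicit: linearize $\vf_t$ around $\vf^*$ so that $J_t$ is treated as frame-independent at first order, with frame dependence entering only through the bounded-covariance noise. Alternatively, I would note that the natural (Gauss--Newton / fixed-point) stationarity condition of $L$, in which the feature dependence of the target $\vf_t$ is frozen, gives gradient $-2(\mathcal{T}(\vf)-\vf)$. That gradient vanishes exactly at $\vf^*$.
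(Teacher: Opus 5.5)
Your proposal is correct and has the same skeleton as the paper's proof: differentiate $L$, note that $-2(\mathcal{T}(\vf)-\vf)$ vanishes at a fixed point, and treat the Jacobian term as negligible. Your Gauss--Newton fallback (freezing the target $\vf_t$) is exactly the paper's proof, which discards $\mathbb{E}_t[J_t^\top(\vf_t-\vf)]$ wholesale. Incidentally, your gradient is the correct one; the paper's display has the opposite sign on the Jacobian term, which is harmless there because the term is dropped.

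What you add is the centering $J_t=\bar J+\Delta J_t$. Since $\mathbb{E}_t[\boldsymbol{\delta}_t]=0$ at $\vf^*$, the mean-Jacobian part cancels exactly, so the only quantity the approximation actually neglects is the frame-wise covariance $\mathbb{E}_t[\Delta J_t^\top\boldsymbol{\delta}_t]$. This is sharper than the paper's argument. Fixed points are exact stationary points whenever $J_t(\vf^*)$ is the same for all frames (e.g.\ $\vf_t(\vf)=A\vf+\mathbf{b}_t$). In general you need only a small Jacobian \emph{spread} across frames, not a small Jacobian. That is the more plausible hypothesis, since the generator must respond to its conditioning. The paper's blanket approximation additionally yields the converse (stationary $\Rightarrow$ fixed point). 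In your setting the converse would also need $I-\bar J$ invertible, but the proposition does not ask for it.

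One step needs repair. The claim $\Delta J_t=O(\|\boldsymbol{\epsilon}\|)$ does not follow from the identity-consistency assumption. The frame-to-frame variation of $J_t$ is driven by the motion input $\va_t$, not by identity noise. Moreover, in both cases you list ($J_t$ constant across frames, or $J_t$ uncorrelated with the residuals), $R(\vf^*)$ is simply zero, so no bound is needed there. For the genuinely intermediate case, state as an explicit hypothesis that both the Jacobian spread and the residual spread are $O(\eta)$. Your Cauchy--Schwarz bound then gives $\|R(\vf^*)\|=O(\eta^2)$, and the argument is complete.
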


\begin{proof}
The gradient of $L(\vf)$ with respect to $\vf$ is
% The gradient of the identity self-consistency objective with respect to $\vf$ is
\[
\nabla_\vf L(\vf) 
= 2\mathbb{E}_t \big[ \vf - E(G(\vf, \mA)_t) \big] 
- 2\mathbb{E}_t \Big[ J_\vf^\top \big(E(G(\vf, \mA)_t) - \vf\big) \Big],
\]
where $J_\vf = \frac{\partial E(G(\vf, \mA)_t)}{\partial \vf}$ is the Jacobian of the generated feature with respect to $\vf$.

For fixed-point analysis, it is standard to adopt a first-order approximation by neglecting the Jacobian term, which yields
\[
\nabla_\vf L(\vf) \approx 2  \big(\vf - \mathbb{E}_t[E(G(\vf, \mA)_t)]\big).
\]

Setting $\nabla_\vf L(\vf) = 0$ immediately gives
\[
\vf = \mathbb{E}_t[E(G(\vf, \mA)_t)] = \mathcal{T}(\vf),
\]
demonstrating that any fixed point of $\mathcal{T}$ is a stationary point of $L(\vf)$ under this approximation.
\end{proof}

This interpretation formalizes TT-SAC as seeking a conditioning feature that is statistically consistent with the identity features produced by the generator under motion (\ie, a fixed point of the operator $\mathcal{T}$ in Eq. \ref{eq:tf}), thereby reducing identity drift and improving temporal coherence.

\subsection{Theoretical Insights}
\label{sec:theory}

% Having motivated the self-consistency objective and practical Monte Carlo update, we now
Here we analyze the statistical properties of aggregated features and the fixed-point iteration to justify TT-SAC theoretically.

\begin{definition}[Conditioning operator]
Let $G(\cdot,\mA)$ be a pretrained generator and $E(\cdot)$ its identity encoder. 
Define the \emph{conditioning operator} $\mathcal{T} : \mathbb{R}^d \to \mathbb{R}^d$ as $\mathcal{T}(\vf) = \mathbb{E}_t \big[ (E \circ G)(\vf, \mA)_t \big]$,
where the expectation is over the generated frames. 
A feature $\vf^*$ is \emph{stable} if it satisfies $\vf^* = \mathcal{T}(\vf^*)$,
\ie, it is a fixed point of the operator. 
This formalizes the self-consistency principle that TT-SAC approximates at test time.
\end{definition}

\begin{proposition}[Covariance of aggregated features]
Let $\vf_t = E(G(\vf_r, \mA)_t)$ be the encoded feature of the $t$-th generated frame, and assume
$\vf_t = \vmu + \boldsymbol{\varepsilon}_t$, $\mathbb{E}[\boldsymbol{\varepsilon}_t] = 0$.
Define the aggregated feature
$\bar{\vf} = \frac{1}{K} \sum_{t=1}^K \vf_t$,
and the lagged covariance
$\boldsymbol{\Gamma}_\tau = \mathrm{Cov}(\boldsymbol{\varepsilon}_t, \boldsymbol{\varepsilon}_{t+\tau})$.
Then the covariance of the aggregated feature is
$\mathrm{Cov}(\bar{\vf}) = \frac{1}{K} \Big( \boldsymbol{\Gamma}_0 + 2 \sum_{\tau=1}^{K-1} \big(1 - \frac{\tau}{K}\big) \boldsymbol{\Gamma}_\tau \Big)$.
\end{proposition}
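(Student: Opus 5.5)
The plan is a direct second-moment computation, using bilinearity of the covariance. Since $\bar{\vf} - \vmu = \frac{1}{K}\sum_{t=1}^K \boldsymbol{\varepsilon}_t$ and $\mathbb{E}[\boldsymbol{\varepsilon}_t]=0$, we have $\mathbb{E}[\bar{\vf}]=\vmu$. Hence
\[
\mathrm{Cov}(\bar{\vf}) = \frac{1}{K^2}\sum_{s=1}^K\sum_{t=1}^K \mathrm{Cov}(\boldsymbol{\varepsilon}_s,\boldsymbol{\varepsilon}_t).
\]
The first thing to make explicit is the assumption that is implicit in the notation $\boldsymbol{\Gamma}_\tau$. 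We assume the noise sequence $\{\boldsymbol{\varepsilon}_t\}$ is (weakly) stationary over the first $K$ frames, so that $\mathrm{Cov}(\boldsymbol{\varepsilon}_t,\boldsymbol{\varepsilon}_{t+\tau})$ depends only on the lag $\tau$ and not on $t$. Without this, $\boldsymbol{\Gamma}_\tau$ would not be well defined.

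Next we reorganize the double sum by lag $\tau = t-s$.

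\begin{itemize}
\item The diagonal terms $s=t$ contribute $K\boldsymbol{\Gamma}_0$.
\item For each $\tau\in\{1,\dots,K-1\}$ there are exactly $K-\tau$ pairs with $t-s=\tau$, each contributing $\boldsymbol{\Gamma}_\tau$.
\item There are likewise $K-\tau$ pairs with $s-t=\tau$, each contributing $\mathrm{Cov}(\boldsymbol{\varepsilon}_{s},\boldsymbol{\varepsilon}_{s-\tau}) = \boldsymbol{\Gamma}_\tau^\top$.
\end{itemize}

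This gives
\[
\mathrm{Cov}(\bar{\vf}) = \frac{1}{K^2}\Big(K\boldsymbol{\Gamma}_0 + \sum_{\tau=1}^{K-1}(K-\tau)(\boldsymbol{\Gamma}_\tau+\boldsymbol{\Gamma}_\tau^\top)\Big).
\]
Factoring out $1/K$ produces the weights $(1-\tau/K)$.

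The main obstacle is the transpose. For vector-valued features, $\boldsymbol{\Gamma}_\tau$ is generally not symmetric, so the lag-$\tau$ contribution is really $\boldsymbol{\Gamma}_\tau+\boldsymbol{\Gamma}_\tau^\top$ rather than $2\boldsymbol{\Gamma}_\tau$. To match the stated formula exactly, I would do one of two things.

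\begin{itemize}
\item Interpret the statement under the additional symmetry assumption $\boldsymbol{\Gamma}_\tau = \boldsymbol{\Gamma}_\tau^\top$. This holds, for instance, for time-reversible noise, for coordinatewise-uncorrelated noise across dimensions, or in the scalar case.
\item State the result with $\boldsymbol{\Gamma}_\tau$ replaced by its symmetric part $\frac12(\boldsymbol{\Gamma}_\tau+\boldsymbol{\Gamma}_\tau^\top)$. This is harmless for the downstream use, since only traces and quadratic forms $\mathrm{tr}(J\,\mathrm{Cov}(\bar{\vf})J^\top)$ enter later, and those see only the symmetric part.
\end{itemize}

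I would adopt the second option and remark that it reduces to the stated expression under symmetry. As a sanity check, the independent case $\boldsymbol{\Gamma}_\tau=0$ for $\tau\ge1$ recovers the $\frac{1}{K}\mathrm{Var}$ formula used earlier in the bias-variance discussion.
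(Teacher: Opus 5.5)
Your proof is correct and follows the same route as the paper. Both write $\mathrm{Cov}(\bar{\vf})$ as $K^{-2}\sum_{i,j}\mathrm{Cov}(\boldsymbol{\varepsilon}_i,\boldsymbol{\varepsilon}_j)$ and regroup by lag, with $K$ diagonal terms and $K-\tau$ pairs on each side of the diagonal at lag $\tau\ge 1$. Your version is more careful than the paper's: the paper leaves stationarity implicit and sets $\mathrm{Cov}(\vf_i,\vf_j)=\boldsymbol{\Gamma}_{|i-j|}$ ``by symmetry,'' whereas, as you observe, the pairs with $i>j$ contribute $\boldsymbol{\Gamma}_\tau^\top$. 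So the stated $2\boldsymbol{\Gamma}_\tau$ form is exact only for symmetric $\boldsymbol{\Gamma}_\tau$, or with $\boldsymbol{\Gamma}_\tau$ replaced by its symmetric part; this changes nothing in the trace-based bounds that use it downstream.
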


\begin{proof}
By definition, the covariance of the empirical mean is
\[
\mathrm{Cov}(\bar{\vf}) 
= \mathrm{Cov}\Big(\frac{1}{K}\sum_{t=1}^K \vf_t\Big) 
= \frac{1}{K^2} \sum_{i=1}^{K} \sum_{j=1}^{K} \mathrm{Cov}(\vf_i, \vf_j).
\]
Substituting $\vf_t \!=\! \vmu + \boldsymbol{\varepsilon}_t$ and noting that adding a constant does not affect covariance, we have $\mathrm{Cov}(\vf_i, \vf_j) \!=\! \mathrm{Cov}(\boldsymbol{\varepsilon}_i, \boldsymbol{\varepsilon}_j) \!=\! \boldsymbol{\Gamma}_{|i-j|}$.
Hence, $\mathrm{Cov}(\bar{\vf}) \!=\! \frac{1}{K^2} \sum_{i=1}^{K} \sum_{j=1}^{K} \boldsymbol{\Gamma}_{|i-j|}$.

We now reorganize the double sum according to the lag $\tau = |i-j|$. The term with $\tau=0$ appears $K$ times (when $i=j$), giving a contribution of $K \boldsymbol{\Gamma}_0$. For each lag $\tau = 1, \dots, K-1$, there are $K-\tau$ pairs $(i,j)$ with $|i-j| = \tau$, and by symmetry each contributes twice. Therefore, the total contribution for lag $\tau$ is $2 (K-\tau) \boldsymbol{\Gamma}_\tau$.
Dividing by $K^2$, we obtain
\begin{align}
    \mathrm{Cov}(\bar{\vf}) 
& = \frac{1}{K^2} \Big( K \boldsymbol{\Gamma}_0 + 2 \sum_{\tau=1}^{K-1} (K-\tau) \boldsymbol{\Gamma}_\tau \Big) \nonumber\\
& = \frac{1}{K} \Big( \boldsymbol{\Gamma}_0 + 2 \sum_{\tau=1}^{K-1} \big(1 - \frac{\tau}{K}\big) \boldsymbol{\Gamma}_\tau \Big), \nonumber
\end{align}
as claimed.
\end{proof}

This analysis follows standard time-series variance estimation, where the covariance of the averaged features depends on temporal correlations between frames. In practice, consecutive generated frames exhibit moderate temporal dependence due to smooth facial motion, which is captured by the lagged covariance terms $\boldsymbol{\Gamma}_\tau$.

\begin{lemma}[Output variance bound]
Assume the generator $G(\vf,\mA)$ is locally Lipschitz with respect to $\vf$:
$\|G(\vf_1,\mA)-G(\vf_2,\mA)\| \le L_G \|\vf_1-\vf_2\|$.
Then, for the aggregated feature $\bar{\vf}$ and mean $\vmu = \mathbb{E}[\vf_t]$,
$\mathbb{E}\big[\|G(\bar{\vf},\mA)-G(\vmu,\mA)\|^2\big] \le L_G^2  \mathrm{tr}(\mathrm{Cov}(\bar{\vf}))$.
\label{lemma:var}
\end{lemma}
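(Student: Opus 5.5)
The bound follows pointwise from the Lipschitz assumption, after which we take expectations and identify the mean squared deviation of $\bar{\vf}$ with the trace of its covariance.

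\textbf{Step 1 (pointwise bound).} For every realization of $\bar{\vf}$, apply the Lipschitz condition with $\vf_1=\bar{\vf}$ and $\vf_2=\vmu$, then square both sides. This gives
\[
\|G(\bar{\vf},\mA)-G(\vmu,\mA)\|^2 \le L_G^2\,\|\bar{\vf}-\vmu\|^2 .
\]
Both sides are nonnegative random variables, so taking expectations preserves the inequality:
\[
\mathbb{E}\big[\|G(\bar{\vf},\mA)-G(\vmu,\mA)\|^2\big] \le L_G^2\,\mathbb{E}\big[\|\bar{\vf}-\vmu\|^2\big].
\]

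\textbf{Step 2 (mean squared deviation equals trace of covariance).} We first show that $\bar{\vf}$ is unbiased for $\vmu$. Under the model of the preceding Proposition (Covariance of aggregated features), $\vf_t=\vmu+\boldsymbol{\varepsilon}_t$ with $\mathbb{E}[\boldsymbol{\varepsilon}_t]=0$. Linearity of expectation then gives $\mathbb{E}[\bar{\vf}]=\frac1K\sum_{t=1}^K\mathbb{E}[\vf_t]=\vmu$. Next, expand the squared norm coordinatewise:
\[
\mathbb{E}\|\bar{\vf}-\vmu\|^2=\sum_{i=1}^d \mathbb{E}\big[(\bar f_i-\mu_i)^2\big]=\sum_{i=1}^d \mathrm{Var}(\bar f_i)=\mathrm{tr}(\mathrm{Cov}(\bar{\vf})).
\]
Substituting this into Step 1 yields the claimed bound. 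If desired, the explicit expression for $\mathrm{Cov}(\bar{\vf})$ from that Proposition can then be plugged in, which relates the bound to the lagged covariances $\boldsymbol{\Gamma}_\tau$ and to the $1/K$ decay.

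\textbf{Main obstacle.} The only delicate point is the word ``locally'' in the Lipschitz assumption. Step 1 needs the inequality at every point $\bar{\vf}$ attains, so we interpret the hypothesis as holding on a region that contains $\vmu$ and the support of $\bar{\vf}$. For example, this follows if each $\vf_t$ lies in a convex set on which $G(\cdot,\mA)$ is $L_G$-Lipschitz, since $\bar{\vf}$ is a convex combination of the $\vf_t$ and therefore stays in that set. We will state this interpretation explicitly. We also assume the second moments are finite, which follows from the bounded covariance in the identity-consistency Assumption, so that all expectations above are well defined. The remaining steps are routine.
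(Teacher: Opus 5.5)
Your proposal is correct and follows the paper's proof essentially line by line: apply the Lipschitz bound pointwise, square, take expectations, and identify $\mathbb{E}\|\bar{\vf}-\vmu\|^2$ with $\mathrm{tr}(\mathrm{Cov}(\bar{\vf}))$. Your explicit check that $\mathbb{E}[\bar{\vf}]=\vmu$ fills in a step the paper leaves implicit in its appeal to the ``standard property of covariance'', and your remark on where the local Lipschitz constant must hold is a useful clarification; neither changes the route of the argument.
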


\begin{proof}
By the Lipschitz property of $G$,
\[
\|G(\bar{\vf},\mA) - G(\vmu,\mA)\| \le L_G \|\bar{\vf}-\vmu\|.
\]
Squaring both sides gives
\[
\|G(\bar{\vf},\mA) - G(\vmu,\mA)\|^2 \le L_G^2 \|\bar{\vf}-\vmu\|^2.
\]
Taking expectation over the randomness in $\bar{\vf}$ yields
\[
\mathbb{E}\big[\|G(\bar{\vf},\mA)-G(\vmu,\mA)\|^2\big] \le L_G^2  \mathbb{E}\|\bar{\vf}-\vmu\|^2.
\]
Finally, by the standard property of covariance,
\[
\mathbb{E}\|\bar{\vf}-\vmu\|^2 = \mathrm{tr}(\mathrm{Cov}(\bar{\vf})),
\]
which completes the proof.
\end{proof}

\begin{lemma}[Approximate fixed-point iteration]
Let $\mathcal{T} : \mathbb{R}^d \to \mathbb{R}^d$ be locally contractive with constant $c < 1$:
$\|\mathcal{T}(\vf_1)-\mathcal{T}(\vf_2)\| \le c \|\vf_1-\vf_2\| \quad \forall \vf_1, \vf_2 \in \mathbb{R}^d$.
Then the exact fixed-point iteration $\vf^{(k+1)} = \mathcal{T}(\vf^{(k)})$ converges linearly to the unique fixed point $\vf^*$.  
Furthermore, the empirical Monte Carlo iteration used in TT-SAC,
$\vf^{(1)} = \hat{\mathcal{T}}(\vf^{(0)}) = \frac{1}{K} \sum_{t=1}^{K} (E \circ G)(\vf^{(0)},\mA)_t$,
satisfies $\mathbb{E}[\vf^{(1)}] = \mathcal{T}(\vf^{(0)})$, so a single step moves $\vf^{(0)}$ toward $\vf^*$ in expectation. 
\label{lemma:fix-point}
\end{lemma}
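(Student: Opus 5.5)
The argument has two parts: a Banach fixed-point argument for the exact iteration, and a linearity-of-expectation computation for the Monte Carlo step. Since the contraction inequality is stated for all $\vf_1,\vf_2\in\mathbb{R}^d$, and $\mathbb{R}^d$ with the Euclidean norm is complete, the Banach contraction mapping theorem applies directly. Concretely, I will iterate the contraction bound to get
\[
\|\vf^{(k+1)}-\vf^{(k)}\|\le c^k\|\vf^{(1)}-\vf^{(0)}\|.
\]
Summing the geometric series then shows $\{\vf^{(k)}\}$ is Cauchy, hence convergent to some $\vf^*$.

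Next, continuity of $\mathcal{T}$ (which follows from the Lipschitz bound) gives $\vf^*=\mathcal{T}(\vf^*)$. Uniqueness follows because two fixed points would satisfy $\|\vf^*-\vg\|\le c\|\vf^*-\vg\|$ with $c<1$. Linear convergence comes from
\[
\|\vf^{(k)}-\vf^*\|=\|\mathcal{T}(\vf^{(k-1)})-\mathcal{T}(\vf^*)\|\le c^k\|\vf^{(0)}-\vf^*\|.
\]

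For the Monte Carlo statement, I will fix $\vf^{(0)}$ and treat the encoded frames $(E\circ G)(\vf^{(0)},\mA)_t$ as random, with the randomness coming from the generator's sampling and the frame index. The key modelling step is to read the expectation $\mathbb{E}_t$ in the definition of $\mathcal{T}$ as the common mean of each sampled frame feature. Equivalently, the $K$ frames are drawn so that each has marginal mean $\mathcal{T}(\vf^{(0)})$, which is the same identical-mean convention as $\mathbb{E}[\vf_t]=\vmu$ in the covariance proposition and Lemma~\ref{lemma:var}. Linearity of expectation then gives $\mathbb{E}[\vf^{(1)}]=\frac1K\sum_t\mathbb{E}[(E\circ G)(\vf^{(0)},\mA)_t]=\mathcal{T}(\vf^{(0)})$. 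No independence is needed, so temporal correlations only affect the variance, as in the preceding covariance proposition.

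"Moving toward $\vf^*$ in expectation" then follows from Jensen's inequality (convexity of the norm) and the contraction:
\[
\|\mathbb{E}[\vf^{(1)}]-\vf^*\|=\|\mathcal{T}(\vf^{(0)})-\mathcal{T}(\vf^*)\|\le c\|\vf^{(0)}-\vf^*\|.
\]
Adding the covariance proposition, I would also record the mean-squared version
\[
\mathbb{E}\|\vf^{(1)}-\vf^*\|^2=\|\mathcal{T}(\vf^{(0)})-\vf^*\|^2+\mathrm{tr}\,\mathrm{Cov}(\vf^{(1)}).
\]
This shows that the actual random step contracts up to a variance term of order $1/K$.

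The main obstacle is the unbiasedness claim. If the first $K$ frames are non-stationary, their mean differs from the full-sequence average $\mathcal{T}(\vf^{(0)})$, and this is exactly the bias term in Eq.~\ref{eq:k}. I will therefore state the identical-mean (or uniform-random-index sampling) assumption explicitly. I will also note that without it the result holds only up to an additive bias, namely $\|\mathbb{E}[\vf^{(1)}]-\vf^*\|\le c\|\vf^{(0)}-\vf^*\|+\mathrm{Bias}(K)$.
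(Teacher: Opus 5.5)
Your proposal is correct and follows the same route as the paper. For the exact iteration you use Banach's theorem, which the paper simply cites while you re-derive it via the Cauchy-sequence argument. For the Monte Carlo step you combine unbiasedness with $\vf^*=\mathcal{T}(\vf^*)$ and the contraction bound to get $\|\mathbb{E}[\vf^{(1)}]-\vf^*\|\le c\|\vf^{(0)}-\vf^*\|$, exactly as the paper does.

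Your explicit identical-mean (or uniform-random-index) hypothesis is a genuine improvement. The paper asserts $\mathbb{E}[\hat{\mathcal{T}}(\vf^{(0)})]=\mathcal{T}(\vf^{(0)})$ without justification, even though $\hat{\mathcal{T}}$ averages only the first $K$ frames; this is in tension with the paper's own motion-induced-bias discussion.

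One small correction: Jensen's inequality plays no role in your displayed bound, which uses only unbiasedness and the Lipschitz property. Jensen would give $\|\mathbb{E}[\vf^{(1)}]-\vf^*\|\le\mathbb{E}\|\vf^{(1)}-\vf^*\|$, the wrong direction for controlling the random iterate. That is why your mean-squared identity is the right supplement.
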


\begin{proof}
By Banach's fixed-point theorem \cite{banach1922operations}, a contractive operator $\mathcal{T}$ admits a unique fixed point $\vf^*$ such that $\mathcal{T}(\vf^*) = \vf^*$. For the exact iteration $\vf^{(k+1)} = \mathcal{T}(\vf^{(k)})$, we have
\[
\|\vf^{(k+1)} - \vf^*\| = \|\mathcal{T}(\vf^{(k)}) - \mathcal{T}(\vf^*)\| \le c \|\vf^{(k)} - \vf^*\|,
\]
which implies linear convergence:
\[
\|\vf^{(k)} - \vf^*\| \le c^k \|\vf^{(0)} - \vf^*\|.
\]

For the Monte Carlo estimator $\hat{\mathcal{T}}(\vf^{(0)})$, we have $\mathbb{E}[\hat{\mathcal{T}}(\vf^{(0)})] = \mathcal{T}(\vf^{(0)})$, so the expected update
\[
\mathbb{E}[\vf^{(1)} - \vf^*] = \mathcal{T}(\vf^{(0)}) - \vf^*,
\]
contracts toward $\vf^*$ with factor $c$. Hence, a single Monte Carlo iteration, as used in TT-SAC, moves the conditioning feature toward the fixed point on average. Empirically, additional iterations provide diminishing improvements and may introduce bias due to temporal averaging over the motion sequence.
\end{proof}

\begin{proposition}[Bias-variance decomposition]
Let $\bar{\vf} = \frac{1}{K} \sum_{t=1}^K \vf_t$ be the empirical mean of $K$ generated latent features, and let $\vmu = \mathbb{E}[\vf_t]$ denote the true mean feature, with $\vmu_K = \mathbb{E}[\bar{\vf}]$ the expected value of the averaged features.  
Linearizing the generator $G$ around $\vmu$ using its Jacobian $J_G(\vmu)$, the expected squared deviation of the generator output due to averaging can be decomposed as
\begin{align}
\mathbb{E}\big[\|G(\bar{\vf},\mA) - G(\vmu,\mA)\|^2\big] 
&\approx \underbrace{\|J_G(\vmu)(\vmu_K-\vmu)\|^2}_{\text{bias}} \nonumber \\
&\quad + \underbrace{\mathrm{tr}\Big(J_G(\vmu)  \mathrm{Cov}(\bar{\vf})  J_G(\vmu)^\top \Big)}_{\text{variance}}. \nonumber
\end{align}
\label{prop:bias-var}
\end{proposition}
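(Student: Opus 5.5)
We will prove the decomposition by linearizing $G$ at $\vmu$ and then applying the usual mean–covariance identity for a quadratic form. Throughout, write $J := J_G(\vmu)$. The approximation sign $\approx$ in the statement will be read as "equal up to terms of order $\mathbb{E}\|\bar{\vf}-\vmu\|^3$". This is the same smoothness regime in which Lemma~\ref{lemma:var} bounds the output deviation.

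\textbf{Linearization.} Assume $G(\cdot,\mA)$ is differentiable at $\vmu$. A first-order Taylor expansion then gives
\[
G(\bar{\vf},\mA)-G(\vmu,\mA) = J(\bar{\vf}-\vmu) + \boldsymbol{r}, \qquad \|\boldsymbol{r}\| = o(\|\bar{\vf}-\vmu\|).
\]
Suppose further that $\nabla G$ is locally Lipschitz with constant $M$. Then $\|\boldsymbol{r}\|\le \tfrac{M}{2}\|\bar{\vf}-\vmu\|^2$. Squaring and taking expectations, the cross term $2\,\mathbb{E}\langle J(\bar{\vf}-\vmu),\boldsymbol{r}\rangle$ and the term $\mathbb{E}\|\boldsymbol{r}\|^2$ are of higher order. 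What remains to compute is therefore $\mathbb{E}\|J(\bar{\vf}-\vmu)\|^2$.

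\textbf{Splitting into bias and fluctuation.} Add and subtract the mean of the aggregated feature:
\[
\bar{\vf}-\vmu = (\vmu_K-\vmu) + (\bar{\vf}-\vmu_K).
\]
Here $\vmu_K-\vmu$ is deterministic, and $\bar{\vf}-\vmu_K$ has mean zero by the definition $\vmu_K=\mathbb{E}[\bar{\vf}]$. Expanding $\|J(\cdot)\|^2$ produces three terms:
\begin{itemize}
\item the squared bias $\|J(\vmu_K-\vmu)\|^2$;
\item a cross term $2\langle J(\vmu_K-\vmu),\, J\,\mathbb{E}[\bar{\vf}-\vmu_K]\rangle$, which vanishes because $\bar{\vf}-\vmu_K$ has mean zero;
\item the fluctuation term $\mathbb{E}\|J(\bar{\vf}-\vmu_K)\|^2$.
\end{itemize}
For the last term we use the trace identity
\[
\mathbb{E}\,\boldsymbol{x}^\top J^\top J\boldsymbol{x} = \mathrm{tr}\big(J\,\mathbb{E}[\boldsymbol{x}\boldsymbol{x}^\top]J^\top\big), \qquad \boldsymbol{x}=\bar{\vf}-\vmu_K,
\]
together with $\mathbb{E}[\boldsymbol{x}\boldsymbol{x}^\top]=\mathrm{Cov}(\bar{\vf})$. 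This yields the variance term $\mathrm{tr}\big(J\,\mathrm{Cov}(\bar{\vf})\,J^\top\big)$, as claimed.

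If we substitute the covariance of aggregated features proposition for $\mathrm{Cov}(\bar{\vf})$, the variance term becomes explicitly $O(1/K)$ times a sum of lagged covariances. This connects the result to Eq.~\ref{eq:k}.

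\textbf{Main obstacle: what $\approx$ means and whether the bias is nonzero.} Under the literal stationary model $\vf_t=\vmu+\boldsymbol{\varepsilon}_t$ with $\mathbb{E}[\boldsymbol{\varepsilon}_t]=0$, we get $\vmu_K=\vmu$, so the bias term is identically zero. To make the statement meaningful, the plan is to work with a non-stationary mean, $\mathbb{E}[\vf_t]=\vmu+\boldsymbol{\delta}_t$, where $\boldsymbol{\delta}_t$ is the motion-induced drift. Then
\[
\vmu_K-\vmu=\frac{1}{K}\sum_{t=1}^{K}\boldsymbol{\delta}_t,
\]
so $\mathrm{Bias}(K)$ is a weighted average of the drift over the first $K$ frames.

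The remainder must also be controlled uniformly in this setting. First we would try to bound it as $O\big(\|\vmu_K-\vmu\|^3+\mathbb{E}\|\bar{\vf}-\vmu_K\|^3\big)$, using the local Lipschitz constant of $\nabla G$ on a neighborhood containing $\bar{\vf}$ with high probability. If a uniform bound of this form is not available, we would simply state the decomposition as exact for the linearized generator $G(\vmu,\mA)+J(\cdot-\vmu)$, which is how the proposition's wording ("linearizing the generator") can be read.
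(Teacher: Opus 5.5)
Your proposal is correct and follows the paper's own route: a first-order Taylor expansion of $G$ at $\vmu$, followed by the identity $\mathbb{E}\|\vx\|^2=\|\mathbb{E}[\vx]\|^2+\mathrm{tr}(\mathrm{Cov}(\vx))$ applied to $\vx=J_G(\vmu)(\bar{\vf}-\vmu)$. You derive that identity inline by centering at $\vmu_K$, whereas the paper quotes it. Your two additions go beyond the paper's argument, which leaves both implicit: the explicit remainder control under a Lipschitz Jacobian, and the observation that the bias term vanishes identically when $\mathbb{E}[\vf_t]=\vmu$ for every $t$.
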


This decomposition separates the contribution of systematic bias (due to temporal drift or non-stationarity in $\bar{\vf}$) from the variance (due to random fluctuations in the sampled features).

\begin{proof}
We start by applying a first-order Taylor expansion of $G$ around the true mean $\vmu$:
\[
G(\bar{\vf},\mA) \approx G(\vmu,\mA) + J_G(\vmu)(\bar{\vf}-\vmu),
\]
where $J_G(\vmu) = \frac{\partial G}{\partial \vf} \big|_{\vf = \vmu}$ is the Jacobian. 

Taking the squared norm and expectation yields
\begin{align}
\mathbb{E}\|G(\bar{\vf},\mA) - G(\vmu,\mA)\|^2 
&\approx \mathbb{E}\|J_G(\vmu)(\bar{\vf}-\vmu)\|^2 \nonumber \\
&= \underbrace{\|J_G(\vmu)(\vmu_K-\vmu)\|^2}_{\text{bias}} \nonumber \\ 
& + \underbrace{\mathrm{tr}\Big(J_G(\vmu)  \mathrm{Cov}(\bar{\vf})  J_G(\vmu)^\top \Big)}_{\text{variance}}, \nonumber
\end{align}
where we used the standard property that for a random vector $\vx$, $\mathbb{E}\|\vx\|^2 = \|\mathbb{E}[\vx]\|^2 + \mathrm{tr}(\mathrm{Cov}(\vx))$.  

Thus, the expected squared deviation of $G(\bar{\vf},\mA)$ decomposes naturally into a bias term capturing systematic shifts and a variance term capturing stochastic fluctuations of the averaged features.
\end{proof}

This decomposition justifies the frame-averaging trade-off in TT-SAC: larger $K$ reduces variance but can increase bias due to temporal drift, as formalized by the optimal frame number $K^*$ (Eq.~\ref{eq:k}). Building on this theoretical foundation, we now evaluate TT-SAC empirically. % We investigate how the frame-averaging trade-off impacts identity preservation, temporal consistency, and overall perceptual quality across diverse motion sequences and datasets. 

\begin{figure*}[tbp]
\centering

% ===== 第一行 =====
\subfloat[Eye deformation and lighting artifacts.]{%
\includegraphics[width=0.48\textwidth]{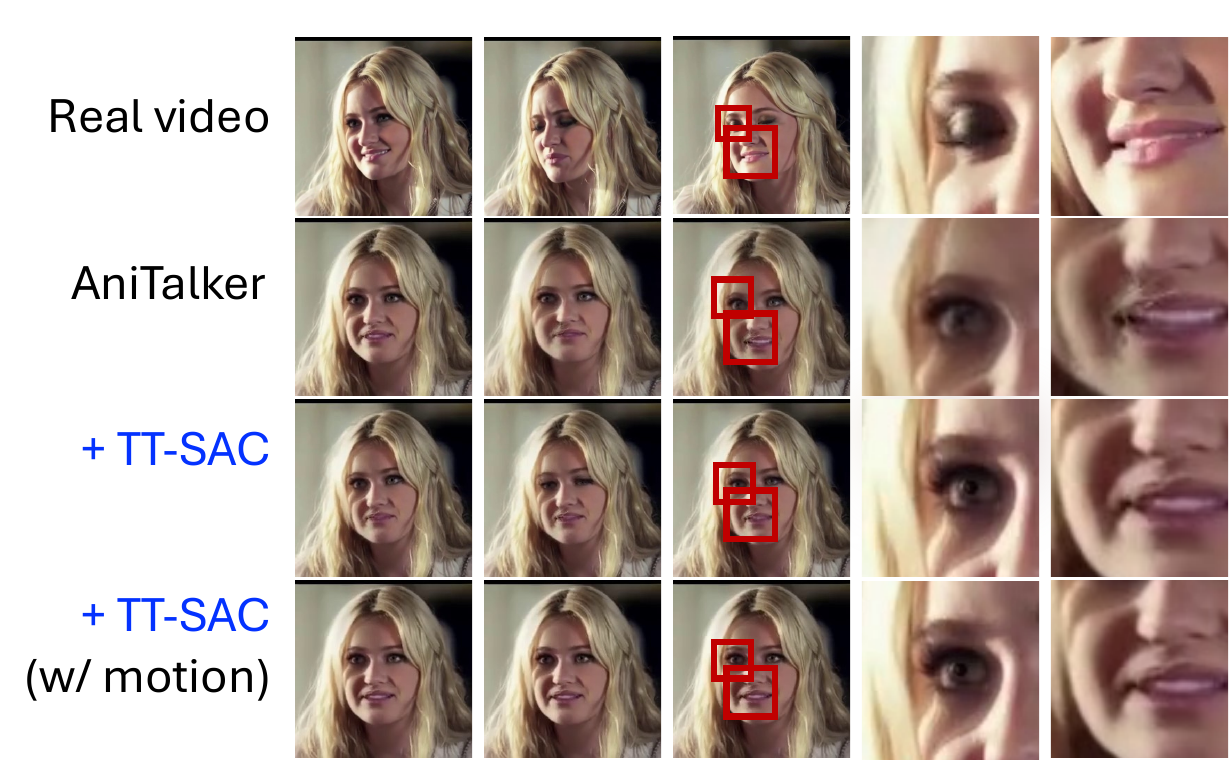}
}
\hfill
\subfloat[Sudden facial widening and identity drift.]{%
\includegraphics[width=0.48\textwidth]{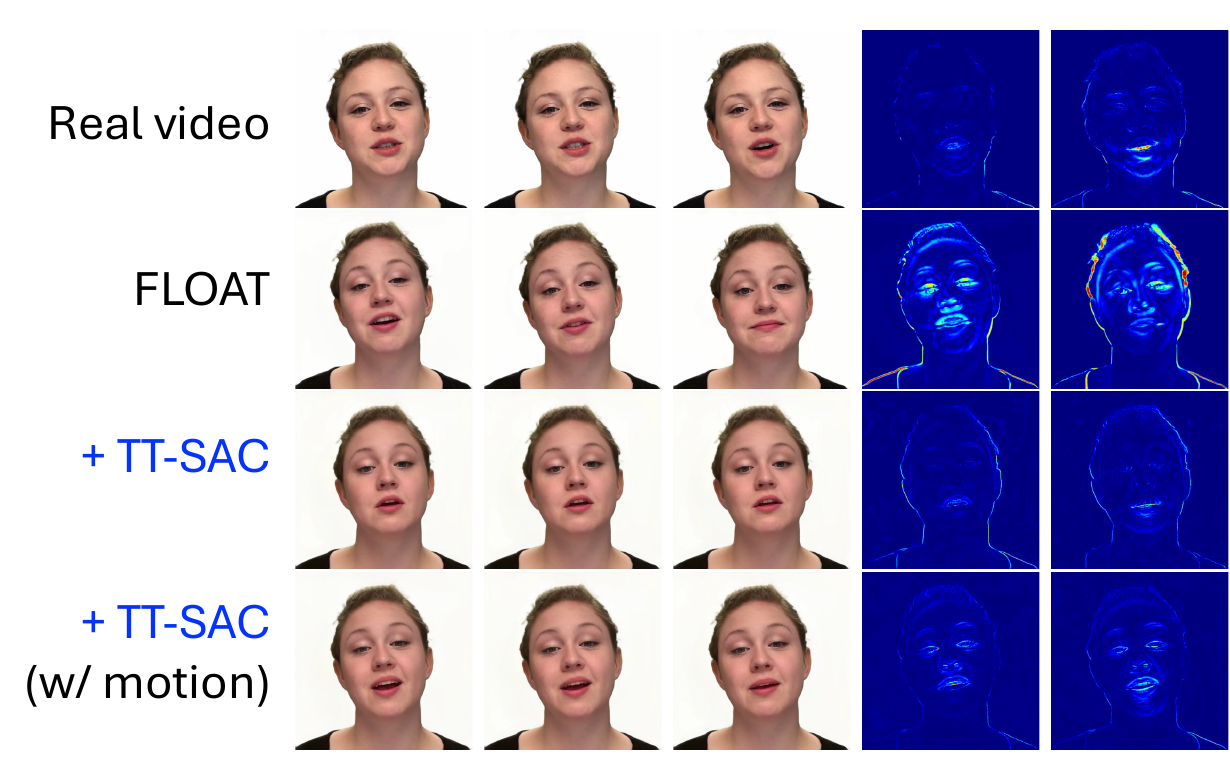}
}\\
% \hfill
\subfloat[Abrupt eye enlargement and blurred teeth.]{%
\includegraphics[width=0.48\textwidth]{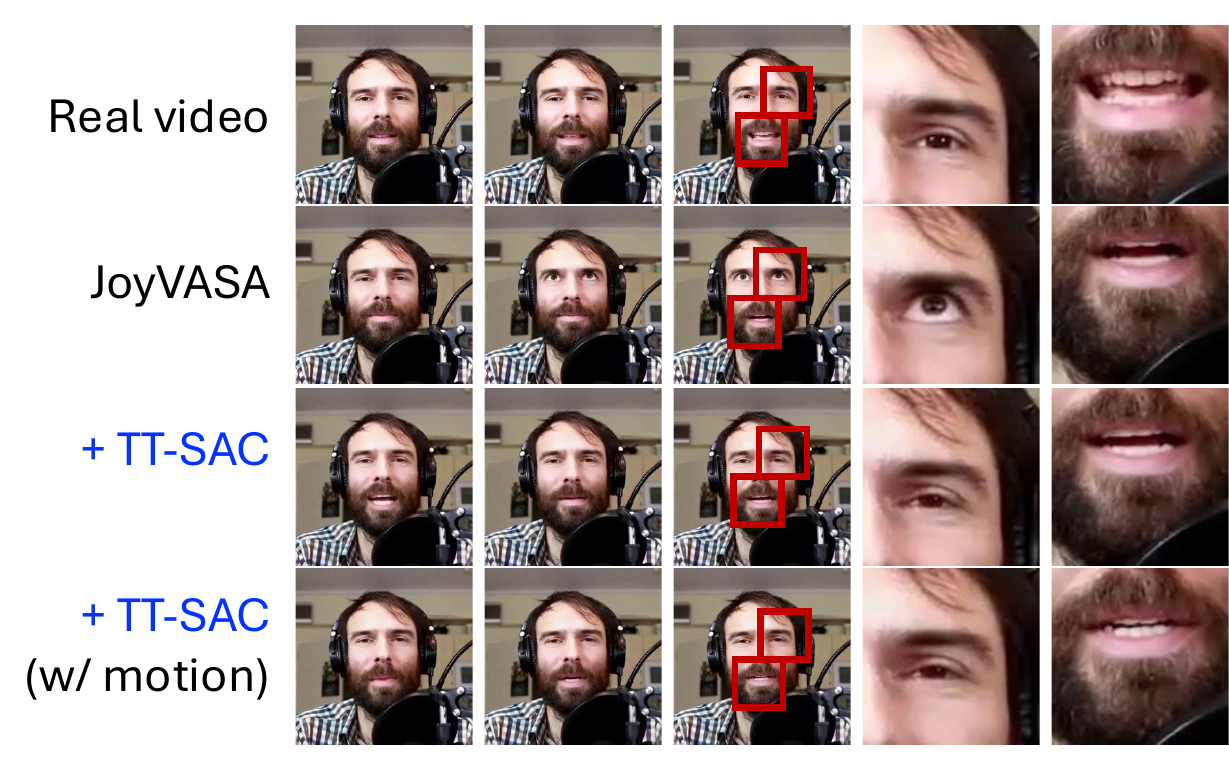}
}
\hfill
%\vspace{0.4cm}
% ===== 第二行 =====
\subfloat[Facial distortion and unnatural rotation.]{%
\includegraphics[width=0.48\textwidth]{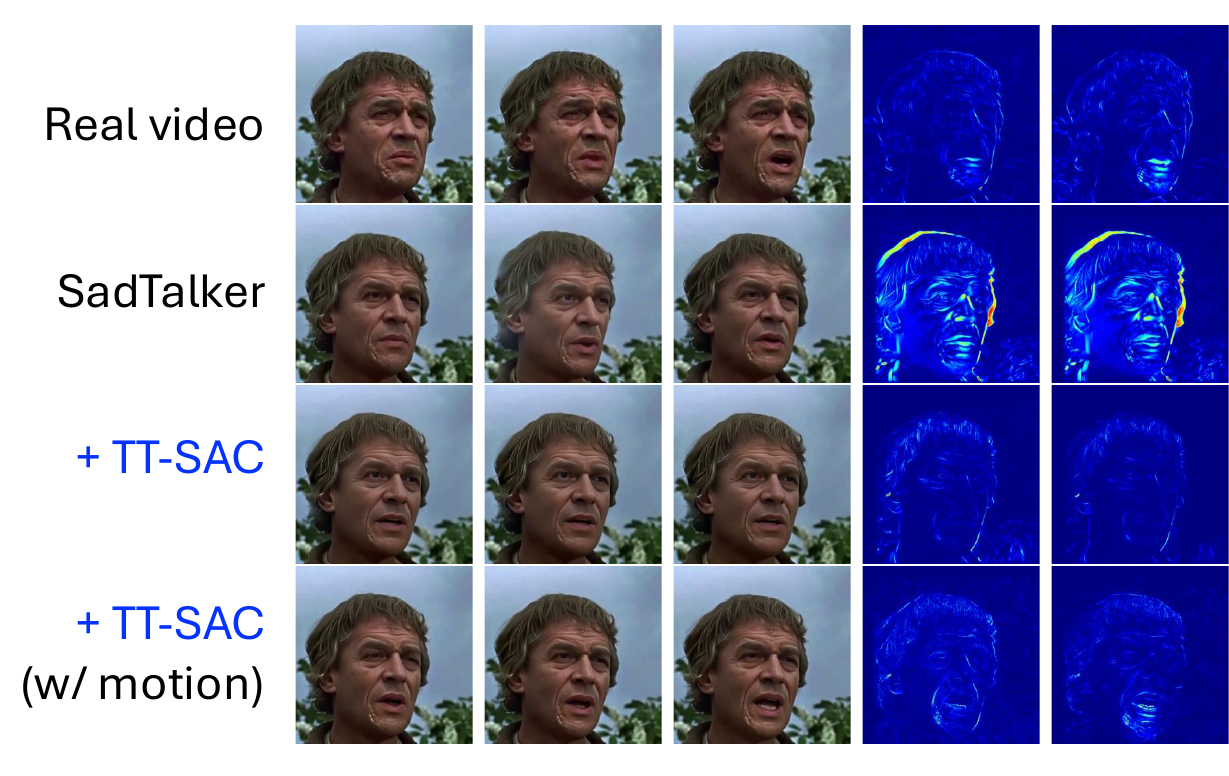}
}\\
% \hfill
\subfloat[Hand blur and audio-visual misalignment.]{%
\includegraphics[width=0.48\textwidth]{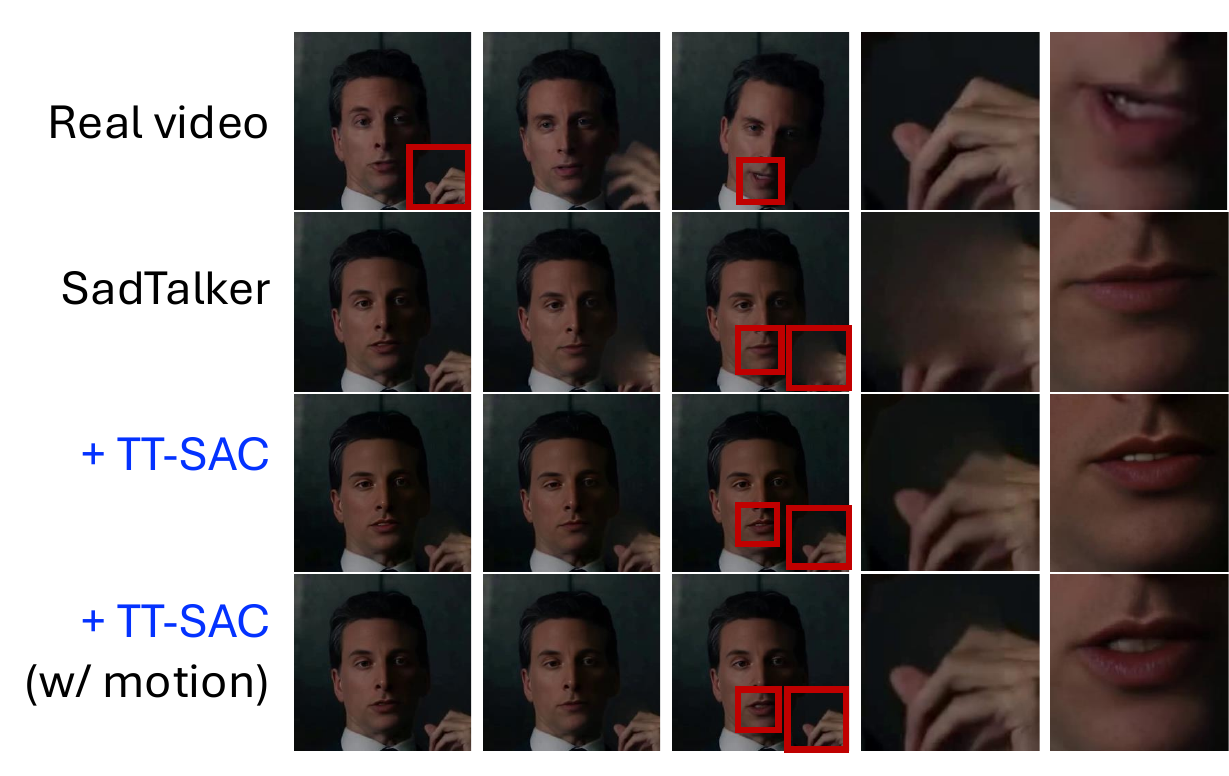}
}
\hfill
\subfloat[Sudden global facial shift and jitter.]{%
\includegraphics[width=0.48\textwidth]{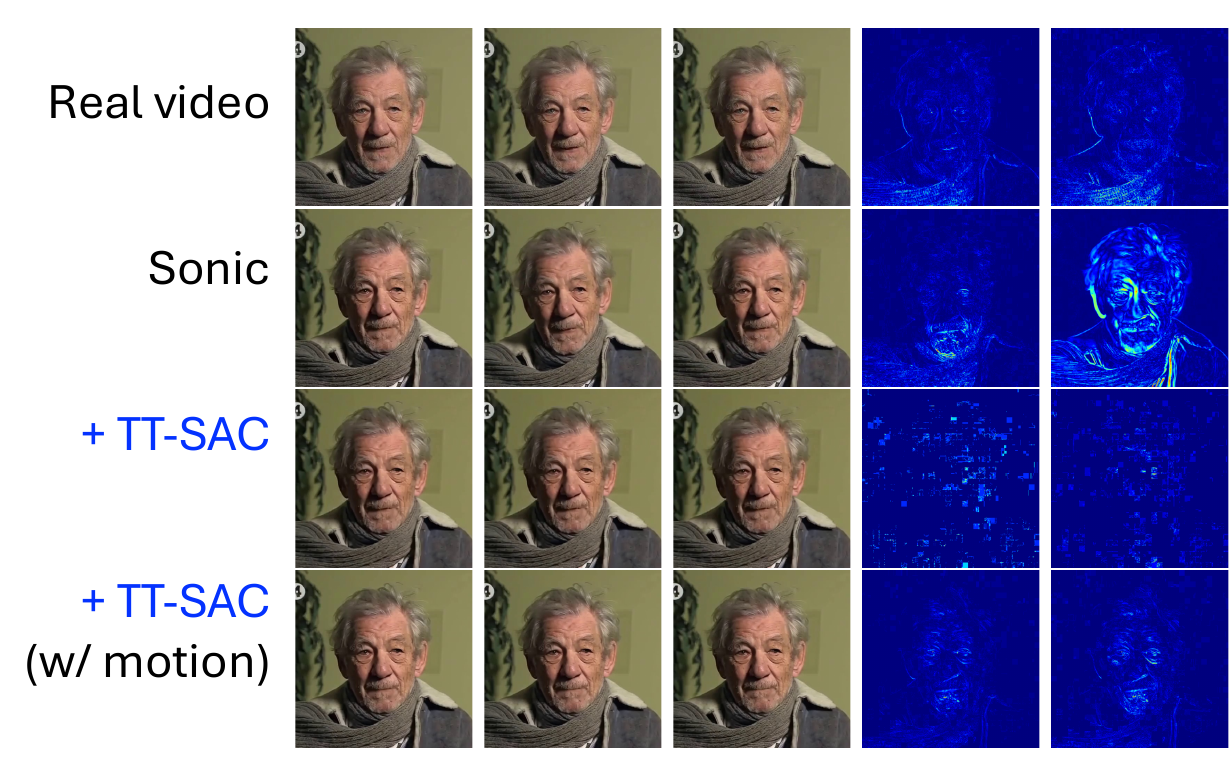}
}
\caption{
Qualitative comparison of typical failure cases. % in audio-driven talking-head generation.
Each block compares the real video against the baseline, baseline+TT-SAC, and TT-SAC on both appearance and motion streams.
% Each block compares the real video with three variants: the baseline model, the baseline augmented with our TT-SAC module, and TT-SAC applied to both the appearance and motion streams (TT-SAC with motion).
% Each block compares the real video with that generated by a baseline model, by the baseline enhanced with our TT-SAC module, and TT-SAC applied to both appearance and motion streams (TT-SAC w/ motion). 
The last columns show either zoomed-in regions (a, c, e) or inter-frame difference heatmaps (b, d, f), where brighter responses indicate larger temporal inconsistencies.
Baseline models exhibit various artifacts, including (a) eye deformation and lighting artifacts, (b) facial widening and identity drift, (c) eye enlargement and blurred teeth, (d) late-frame facial distortion and unnatural rotation, (e) hand blur and audio-visual misalignment, and (f) sudden global facial shifts causing jitter. 
By integrating TT-SAC, these artifacts are substantially reduced, and further improvements are observed when the module is also applied to the motion stream, yielding more stable facial structure, improved temporal consistency, and better identity preservation.
}
\label{fig:main}
\end{figure*}

\section{Experiment}

\label{sec:experiments}

% We present the experimental setup and subsequent analysis.

We first describe the experimental setup, followed by our evaluations and analysis.

\subsection{Setup}

\textbf{Datasets.} We evaluate TT-SAC on three benchmark datasets: Hallo~\cite{cui2025hallo3}, RAVDESS~\cite{livingstone2018ryerson}, and CelebV-HQ~\cite{zhu2022celebv}. All videos are standardized to 25 frames per second with audio resampled to 16 kHz, and facial regions are cropped and resized to $512\! \times\! 512$ pixels following standard preprocessing protocols. For evaluation, we select diverse clips to ensure robust analysis: 100 clips of 4 seconds from Hallo, 50 clips of 4-10 seconds from RAVDESS containing 24 distinct identities, and 50 clips of 4 seconds from CelebV-HQ. This selection captures variability in both identity and speaking style, supporting comprehensive quantitative and qualitative assessment.

\textbf{Models.} We compare TT-SAC against five state-of-the-art audio-driven talking-head generation methods with publicly available implementations: SadTalker~\cite{zhang2023sadtalker}, Sonic~\cite{ji2025sonic}, AniTalker~\cite{liu2024anitalker}, FLOAT~\cite{ki2025float}, and JoyVASA~\cite{cao2024joyvasa}. Experiments are conducted on the three benchmark datasets using all pretrained generators. We evaluate two variants of TT-SAC: (i) + TT-SAC, which applies fixed-point conditioning refinement only to the identity pathway, and (ii) + TT-SAC (w/ motion), which additionally refines motion pathways (\eg, keypoint streams or motion fields) when supported by the underlying model. Evaluation metrics cover lip-audio synchronization, temporal smoothness, perceptual quality, identity preservation, and overall video-level fidelity, providing a thorough comparison against baseline methods.

\textbf{Metrics.}
We assess the quality of generated videos along multiple dimensions. Fr\'echet Inception Distance (FID)~\cite{jayasumana2024rethinking} and 16-frame Fr\'echet Video Distance (FVD)~\cite{unterthiner2019fvd} evaluate overall visual and video-level fidelity. Audio-visual synchronization is measured using SyncNet scores (Sync-C and Sync-D)~\cite{chung2016out}. Identity preservation and perceptual similarity are quantified via cosine similarity of identity embeddings (CSIM)~\cite{Deng2019ArcFace} and Learned Perceptual Image Patch Similarity (LPIPS)~\cite{Zhang2018LPIPS}, respectively. Temporal coherence is measured using the Smooth score across consecutive frames.

\begin{table*}[tbp]
\centering
\small

\caption{Quantitative comparison on Hallo, CelebV-HQ, and RAVDESS.  
TT-SAC consistently improves lip-audio synchrony (Sync-C$\uparrow$ / Sync-D$\downarrow$), temporal smoothness (Smooth$\uparrow$), perceptual quality (LPIPS$\downarrow$), and identity preservation (CSIM$\uparrow$), leading to lower video-level distortion(FID$\downarrow$, FVD$\downarrow$) across all pretrained generators.  
\textcolor{blue}{+ TT-SAC} denotes our fixed-point conditioning refinement applied only to the identity pathway, while \textcolor{blue}{+ TT-SAC} (w/ motion) additionally applies the feedback to other driving signals (\eg, motion fields or keypoint streams) when supported by the underlying architecture.  
The results show that TT-SAC provides broad, model-agnostic improvements, and extending the refinement to motion pathways can yield further gains.
}
\resizebox{\textwidth}{!}{%
\begin{tabular}{crrrrrrrrrr
}
\toprule
\textbf{Dataset} & \textbf{Model} & \textbf{Venue} & \textbf{Method} &
\textbf{Sync-C$\uparrow$} & \textbf{Sync-D$\downarrow$} & \textbf{Smooth$\uparrow$} &
\textbf{LPIPS$\downarrow$} & \textbf{CSIM$\uparrow$} & \textbf{FID$\downarrow$} & \textbf{FVD$\downarrow$} \\
\midrule

% =====================
% Hallo Source Video
% =====================

\multirow{16}{*}{\textbf{Hallo}}
 & \multicolumn{3}{c}{\textit{Real video}}
 & 5.8875 & 8.2478 & 0.9945 & 0      & 1      & 0    & 0 \\
\addlinespace[0.1ex]  
\cline{2-11}
\addlinespace[0.1ex]  
% -------- AniTalker --------

& \multirow{3}{*}{AniTalker~\cite{liu2024anitalker}}& \multirow{3}{*}{ACMMM 2024}
  & Baseline
  & 3.9164 & 9.7782 & 0.9949 & 0.2762 & 0.7561 & 37.3635 & 143.3991 \\
& &  & \textcolor{blue}{+ TT-SAC}
  & \textbf{4.0822} & 9.8827 & 0.9951 & 0.2350 & 0.7990 & 27.4215 & 121.7679 \\
&  & & \textcolor{blue}{+ TT-SAC} (w/ motion)
  & 3.9488 & \textbf{9.6495} & \textbf{0.9954} & \textbf{0.1561} & \textbf{0.8445} & \textbf{22.1803} & \textbf{85.1056} \\
\addlinespace[0.1ex]
\cline{2-11}
\addlinespace[0.1ex]

% -------- FLOAT --------
& \multirow{3}{*}{FLOAT~\cite{ki2025float}}
  & \multirow{3}{*}{ICCV 2025} & Baseline
  & 3.4858 & 9.8774 & 0.9946 & 0.2423 & 0.7450 & 22.5672 & 129.1315 \\
& & & \textcolor{blue}{+ TT-SAC}
  & \textbf{3.5726} & \textbf{9.8724} & \textbf{0.9955} & 0.1810 & 0.7793 & \textbf{15.6304} & 109.6891 \\
& & & \textcolor{blue}{+ TT-SAC} (w/ motion)
  & 3.4446 & 9.9211 & 0.9952 & \textbf{0.1787} & \textbf{0.7995} & 15.7302 & \textbf{99.1639} \\
\addlinespace[0.1ex]  
\cline{2-11}
\addlinespace[0.1ex]  

% -------- JoyVASA --------
& \multirow{3}{*}{JoyVASA~\cite{cao2024joyvasa}}
  & \multirow{3}{*}{arXiv 2024} & Baseline
  & 6.4219 & 7.8281 & 0.9958 & 0.1311 & 0.8198 & 14.4476 & 119.5355 \\
& & & \textcolor{blue}{+ TT-SAC}
  & 5.4781 & 8.7902 & 0.9958 & \textbf{0.0720} & \textbf{0.8963} & \textbf{8.1284} & \textbf{68.5048} \\
& & & \textcolor{blue}{+ TT-SAC} (w/ motion)
  & \textbf{6.5690} & \textbf{7.7913} & \textbf{0.9959} & 0.0730 & 0.8882 & 9.0023 & 69.7138 \\
\addlinespace[0.1ex]  
\cline{2-11}
\addlinespace[0.1ex]  

% -------- Sadtalker --------
& \multirow{3}{*}{SadTalker~\cite{zhang2023sadtalker}}
  & \multirow{3}{*}{CVPR 2023} & Baseline
  & 5.4247 & 8.6527 & \textbf{0.9959} & 0.1424 & 0.7643 & 25.5395 & 127.6444 \\
&  & & \textcolor{blue}{+ TT-SAC}
  & 5.3596 & \textbf{8.6202} & 0.9955 & 0.0923 & 0.8247 & \textbf{18.1258} & \textbf{83.1818} \\
&  & & \textcolor{blue}{+ TT-SAC} (w/ motion)
  & \textbf{5.5373} & 8.6218 & 0.9955 & \textbf{0.0915} & \textbf{0.8255} & 21.1149 & 95.5873 \\
\addlinespace[0.1ex]  
\cline{2-11}
\addlinespace[0.1ex]  

% -------- Sonic --------
& \multirow{3}{*}{Sonic~\cite{ji2025sonic}}
  & \multirow{3}{*}{CVPR 2025} & Baseline
  & 6.4219 & 7.8281 & \textbf{0.9963} & 0.1552 & 0.8041 & 13.6096 & 92.4699 \\
&  & & \textcolor{blue}{+ TT-SAC}
  & 6.2633 & 7.9277 & \textbf{0.9963} & \textbf{0.1240} & \textbf{0.8465} & \textbf{12.3094} & \textbf{76.5809} \\
& & & \textcolor{blue}{+ TT-SAC} (w/ motion)
  & \textbf{6.5690} & \textbf{7.7912} & 0.9962 & 0.1323 & 0.8349 & 26.4608 & 100.4395 \\
\hline

% ========================
% CelebV-HQ 大组（共 16 行）
% ========================
\multirow{16}{*}{\textbf{CelebV-HQ}}

 & \multicolumn{3}{c}{\textit{Real video}} 
 & 2.2966 & 9.7919 & 0.9938 & 0      & 1      & 0    & 0 \\
 \addlinespace[0.1ex]  
\cline{2-11}
\addlinespace[0.1ex]  
& \multirow{3}{*}{AniTalker~\cite{liu2024anitalker}}
   & \multirow{3}{*}{ACMMM 2024} & Baseline
   & 2.1208 & 9.9276 & 0.9949 & 0.2762 & 0.6604 & 72.6029 & 370.0368 \\
 &  & & \textcolor{blue}{+ TT-SAC}
   & 1.6679 & 10.5478 & 0.9952 & 0.2495 & 0.6882 & 60.9655 & 315.9870 \\
 &  & & \textcolor{blue}{+ TT-SAC} (w/ motion)
   & \textbf{2.2708} & \textbf{9.7044} & \textbf{0.9956} & \textbf{0.1627} & \textbf{0.7772} & \textbf{44.2022} & \textbf{215.8235} \\
 \addlinespace[0.1ex]  
 \cline{2-11}
 \addlinespace[0.1ex]  
% FLOAT ------------------------------------------------------
 & \multirow{3}{*}{FLOAT~\cite{ki2025float}}
   & \multirow{3}{*}{ICCV 2025} & Baseline
   & 2.3673 & 9.6682 & 0.9948 & 0.2746 & 0.6373 & 58.1084 & 363.1891 \\
 &  & & \textcolor{blue}{+ TT-SAC}
   & 2.7147 & 9.3903 & 0.9954 & \textbf{0.1979} & 0.7050 & \textbf{39.2848} & 243.2008 \\
 &  & & \textcolor{blue}{+ TT-SAC} (w/ motion)
   & \textbf{2.7475} & \textbf{9.3606} & \textbf{0.9957} & \textbf{0.1979} & \textbf{0.7141} & 39.6612 & \textbf{243.1321} \\
 \addlinespace[0.1ex]  
 \cline{2-11}
 \addlinespace[0.1ex]  
% JoyVASA -----------------------------------------------------
 & \multirow{3}{*}{JoyVASA~\cite{cao2024joyvasa}}
   & \multirow{3}{*}{arXiv 2024} & Baseline
   & 2.7024 & 9.9814 & 0.9961 & 0.1432 & 0.7934 & 26.4852 & 272.2531 \\
 &  & & \textcolor{blue}{+ TT-SAC}
   & 2.3604 & 9.8732 & \textbf{0.9962} & 0.0819 & 0.8527 & \textbf{17.5746} & 171.9289 \\
 &  & & \textcolor{blue}{+ TT-SAC} (w/ motion)
   & \textbf{2.7530} & \textbf{9.5639} & 0.9961 & \textbf{0.0794} & \textbf{0.8680} & 16.7644 & \textbf{164.6510} \\
 \addlinespace[0.1ex]  
 \cline{2-11}
 \addlinespace[0.1ex]  
% Sadtalker --------------------------------------------------
 & \multirow{3}{*}{SadTalker~\cite{zhang2023sadtalker}}
   & \multirow{3}{*}{CVPR 2023} & Baseline
   & 2.9788 & 9.3258 & 0.9960 & 0.1540 & 0.7533 & 50.8880 & 351.6044 \\
 &  & & \textcolor{blue}{+ TT-SAC}
   & 2.9844 & 9.3918 & \textbf{0.9960} & \textbf{0.0922} & \textbf{0.8419} & \textbf{28.5864} & \textbf{205.7255} \\
 &  & & \textcolor{blue}{+ TT-SAC} (w/ motion)
   & \textbf{3.0859} & \textbf{9.1911} & 0.9957 & 0.0990 & 0.8116 & 35.0428 & 212.1751 \\
 \addlinespace[0.1ex]  
\cline{2-11}
\addlinespace[0.1ex]  
% Sonic ------------------------------------------------------
 & \multirow{3}{*}{Sonic~\cite{ji2025sonic}}
   & \multirow{3}{*}{CVPR 2025} & Baseline
   & 3.0684 & 9.0353 & 0.9963 & 0.1874 & 0.7705 & 29.9165 & 245.1475 \\
 &  & & \textcolor{blue}{+ TT-SAC}
   & \textbf{3.0803} & \textbf{9.0163} & 0.9963 & \textbf{0.0989} & \textbf{0.8664} & \textbf{17.0924} & \textbf{152.4971} \\
 &  & &  \textcolor{blue}{+ TT-SAC} (w/ motion)
   & 2.9029 & 9.0855 & \textbf{0.9966} & 0.1384 & 0.8139 & 24.0233 & 179.9765 \\

\hline

% ========================
% RAVDESS 大组（共 16 行）
% ========================
\multirow{16}{*}{\textbf{RAVDESS}}

 & \multicolumn{3}{c}{\textit{Real video}}
 & 2.8529 & 7.6629 & 0.9948 & 0      & 1      & 0   & 0 \\
 \addlinespace[0.1ex]  
 \cline{2-11}
 \addlinespace[0.1ex]  
% AniTalker ---------------------------------------------------
 & \multirow{3}{*}{AniTalker~\cite{liu2024anitalker}}
   & \multirow{3}{*}{ACMMM 2024} & Baseline
   & 1.7118 & 8.4952 & 0.9953 & 0.1580 & 0.8890 & 38.3017 & 95.9668 \\
 &  & & \textcolor{blue}{+ TT-SAC}
   & \textbf{1.8344} & \textbf{8.3215} & 0.9954 & 0.1105 & 0.9227 & 21.4968 & 73.4484 \\
 &  & & \textcolor{blue}{+ TT-SAC} (w/ motion)
   & 1.7437 & 8.5157 & \textbf{0.9956} & \textbf{0.0889} & \textbf{0.9269} & \textbf{21.28172} & \textbf{59.8016} \\
 \addlinespace[0.1ex]  
\cline{2-11}
\addlinespace[0.1ex]  
% FLOAT ------------------------------------------------------
 & \multirow{3}{*}{FLOAT~\cite{ki2025float}}
   & \multirow{3}{*}{ICCV 2025} & Baseline
   & 3.4310 & 6.9730 & 0.9951 & 0.0992 & 0.8809 & 9.5785 & 88.6955 \\
 &  & & \textcolor{blue}{+ TT-SAC}
   & \textbf{3.5260} & 6.9342 & 0.9955 & \textbf{0.0678} & \textbf{0.9107} & \textbf{7.49182} & 70.6535 \\
 &  & & \textcolor{blue}{+ TT-SAC} (w/ motion)
   & 3.5004 & \textbf{6.9190} & \textbf{0.9956} & 0.0727 & 0.9066 & 7.9720 & \textbf{68.3226} \\
 \addlinespace[0.1ex]  
\cline{2-11}
\addlinespace[0.1ex]  
 % JoyVASA -----------------------------------------------------
 & \multirow{3}{*}{JoyVASA~\cite{cao2024joyvasa}}
   & \multirow{3}{*}{arXiv 2024} & Baseline
   & 1.6303 & 9.4236 & 0.9958 & 0.0671 & 0.8844 & 10.0316 & 108.1995 \\
 &  & & \textcolor{blue}{+ TT-SAC}
   & 1.2794 & 9.7171 & 0.9958 & 0.0431 & 0.9049 & 10.0169 & 94.4847 \\
 & & & \textcolor{blue}{+ TT-SAC} (w/ motion)
   & \textbf{1.7931} & \textbf{9.4036} & \textbf{0.9959} & \textbf{0.0409} & \textbf{0.9184} & \textbf{6.86062} & \textbf{69.9282} \\
 \addlinespace[0.1ex]  
\cline{2-11}
\addlinespace[0.1ex]  
% Sadtalker --------------------------------------------------
 & \multirow{3}{*}{SadTalker~\cite{zhang2023sadtalker}}
   & \multirow{3}{*}{CVPR 2023} & Baseline
   & 1.9095 & 8.1480 & \textbf{0.9957} & 0.0821 & 0.8141 & 20.9527 & 96.0455 \\
 &  & & \textcolor{blue}{+ TT-SAC}
   & \textbf{1.9600} & \textbf{8.1085} & 0.9951 & \textbf{0.0614} & \textbf{0.8815} & \textbf{11.77912} & \textbf{60.6654} \\
 &  & & \textcolor{blue}{+ TT-SAC} (w/ motion)
   & 1.8419 & 8.3261 & 0.9955 & 0.0655 & 0.8532 & 16.1888 & 77.9732 \\
 \addlinespace[0.1ex]  
 \cline{2-11}
 \addlinespace[0.1ex]  
% Sonic ------------------------------------------------------
 & \multirow{3}{*}{Sonic~\cite{ji2025sonic}}
   & \multirow{3}{*}{CVPR 2025} & Baseline
   & 2.5563 & 7.7422 & 0.9961 & 0.1246 & 0.8986 & 10.5022 & 63.1852 \\
 &  & & \textcolor{blue}{+ TT-SAC}
   & \textbf{2.5648} & 7.7444 & 0.9961 & \textbf{0.0647} & \textbf{0.9432} & \textbf{5.87422} & \textbf{36.0834} \\
 &  & & \textcolor{blue}{+ TT-SAC} (w/ motion)
   & 2.4840 & \textbf{7.6838} & \textbf{0.9963} & 0.0896 & 0.9280 & 8.2122 & 48.1288 \\

\bottomrule
\end{tabular}
}
\label{tab:main}
\end{table*}

\subsection{Evaluation}

Table~\ref{tab:main} summarises the quantitative results, and Figure~\ref{fig:main} presents visual comparisons. Below, we provide a detailed analysis and evaluation.

\textbf{Lip-audio synchronization.} Across all datasets and generators, TT-SAC consistently improves lip-audio alignment, as measured by Sync-C and Sync-D. On the Hallo dataset, AniTalker increases from 3.9164 to 4.0822 in Sync-C using identity-only TT-SAC, with refinement to 3.9488 when motion pathways are included. 
On RAVDESS, Sonic improves from 2.5563 to 2.5648 in Sync-C, while Sync-D decreases correspondingly. These improvements indicate that dynamically refining the reference reduces identity drift and ensures that the mouth region remains aligned with the input speech signal over time. % The motion pathway extension is effective for generators such as AniTalker and JoyVASA, which exhibit more pronounced motion-driven misalignments. 
By using the generated frames themselves to update the latent conditioning, TT-SAC addresses a fundamental limitation of prior methods that rely on a static reference.

\textbf{Temporal smoothness} quantified by Smooth$\uparrow$ benefits from TT-SAC. FLOAT on Hallo improves from 0.9946 to 0.9955 with identity-only TT-SAC and remains stable at 0.9952 when motion TT-SAC is applied. Improvements for JoyVASA are smaller, reflecting its already smooth baseline generation. These results confirm that computing a self-consistent conditioning feature from early frames reduces stochastic fluctuations, as predicted by our theoretical variance analysis. By mitigating frame-to-frame jitter, TT-SAC enhances perceptual realism and contributes to smoother video transitions.

\textbf{Perceptual quality.} Perceptual quality (LPIPS$\downarrow$) improves significantly across all datasets and models when applying TT-SAC. AniTalker on Hallo sees a reduction from 0.2762 to 0.2350 with identity TT-SAC and further to 0.1561 when motion pathways are incorporated. Sonic on RAVDESS shows an LPIPS decrease from 0.1246 to 0.0647 using identity-only TT-SAC. These results indicate that TT-SAC preserves high-frequency facial details and textures, leading to perceptually more realistic frames. Motion pathway refinement further enhances quality when the generator supports temporally coherent motion, demonstrating the flexibility of TT-SAC to improve both appearance and dynamics.

\textbf{Identity preservation.} The CSIM metric shows substantial improvements in identity consistency when using TT-SAC. For instance, AniTalker on Hallo increases from 0.7561 to 0.7990 by applying identity-only TT-SAC, rising further to 0.8445 with motion-aware TT-SAC. Similarly, JoyVASA on CelebV-HQ improves from 0.7934 to 0.8527. These results demonstrate that TT-SAC reduces identity drift by refining the conditioning feature toward a self-consistent fixed point. Motion-pathway updates can introduce small trade-offs in some cases, such as JoyVASA on CelebV-HQ, but the overall identity consistency is substantially improved. The gains are consistent with the fixed-point self-consistency formulation and the variance-reduction analysis in Section~\ref{sec:theory}, providing theoretical justification for TT-SAC’s effectiveness.

% \begin{figure*}[tbp]
% \centering
% \subfloat[Sync-C$\uparrow$]{%
% \includegraphics[width=0.32\linewidth, height=3.5cm]{figures/subplot_metric_1.pdf}%
% \label{fig:sync-c}}
% % \hfill
% \subfloat[Sync-D$\downarrow$]{%
% \includegraphics[width=0.24\linewidth, height=3.5cm]{figures/subplot_metric_2.pdf}%
% \label{fig:sync-d}}
% % \hfill
% \subfloat[CSIM$\uparrow$]{%
% \includegraphics[width=0.21\linewidth, height=3.5cm]{figures/subplot_metric_3.pdf}%
% \label{fig:csim}}
% % \hfill
% \subfloat[FID$\downarrow$]{%
% \includegraphics[width=0.195\linewidth, height=3.5cm]{figures/subplot_metric_4.pdf}%
% \label{fig:fid}}

% \caption{Evaluation of the aggregation window size $K$ across five audio-driven talking-head models using multiple metrics. Red dots indicate the baseline without TT-SAC ($K=0$), while blue dots show performance for different $K$ values ($K=1,2,3,5,10$) with TT-SAC. The plots demonstrate that TT-SAC consistently improves all metrics, with optimal performance typically achieved at smaller $K$ values, balancing temporal stability and identity preservation.}

% \label{fig:hyper-eval}
% \end{figure*}

\begin{figure*}[tbp]
\centering
\includegraphics[width=0.12\textwidth]{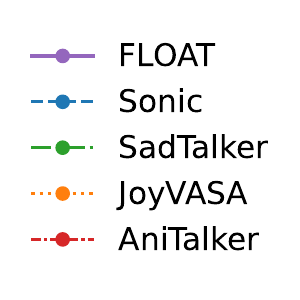}% \hfill
\subfloat[Sync-C$\uparrow$]{%
\includegraphics[width=0.22\linewidth]{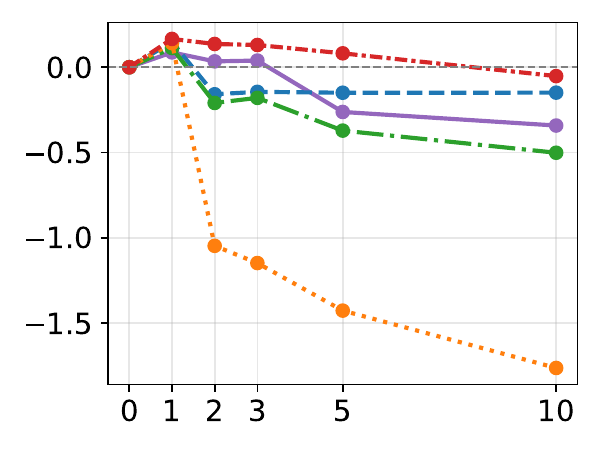}%
}% \hfill
\subfloat[Sync-D$\downarrow$]{%
\includegraphics[width=0.22\linewidth]{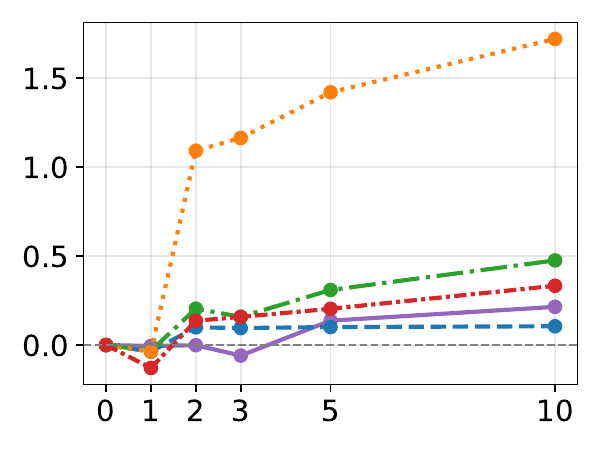}%
}% \hfill
\subfloat[CSIM$\uparrow$]{%
\includegraphics[width=0.22\linewidth]{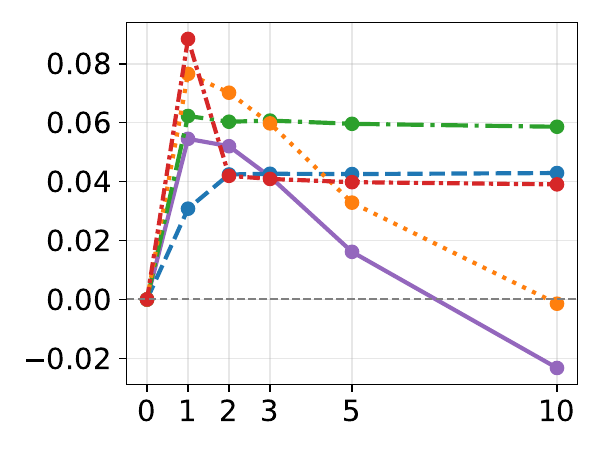}%
}% \hfill
\subfloat[FID$\downarrow$]{%
\includegraphics[width=0.22\linewidth]{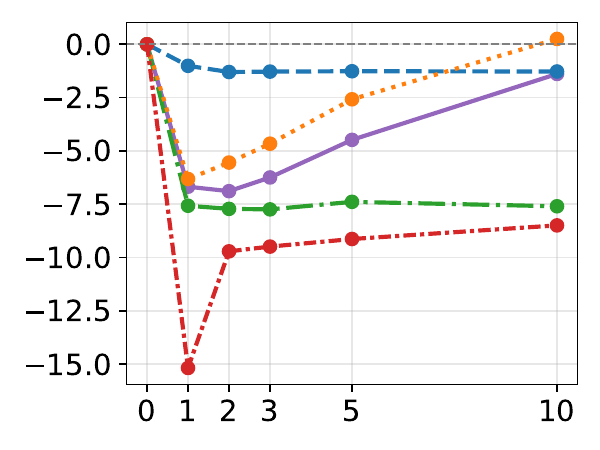}%
}
\caption{Effect of the number of aggregated frames $K$ in TT-SAC across five models.
The horizontal axis shows $K$, while the vertical axis reports performance change relative to the baseline ($\Delta$), where $K=0$ denotes standard inference.
For metrics with $\uparrow$ (Sync-C, CSIM), higher values indicate improvement, whereas for $\downarrow$ (Sync-D, FID), lower values indicate improvement.
Smaller $K$ generally yields larger gains, while larger $K$ leads to saturation or slight degradation.}
% Effect of aggregation window size $K$ in TT-SAC across five models. 
% The horizontal axis denotes the aggregation window size $K$, and the vertical axis shows performance change relative to baseline ($\Delta$), where $0$ corresponds to standard inference without TT-SAC ($K\!=\!0$). 
% For metrics with $\uparrow$ (Sync-C, CSIM), positive values indicate improvement, whereas for metrics with $\downarrow$ (Sync-D, FID), negative values indicate improvement. 
% All subfigures share the same legend. 
% Smaller window sizes yield stronger gains, while larger $K$ values show saturation or slight degradation, reflecting a bias-variance trade-off.
\label{fig:hyper-eval}
\end{figure*}

% \begin{figure*}[tbp]
% \centering
% \begin{subfigure}[t]{0.32\linewidth}
% \centering\includegraphics[width=\linewidth, height=3.5cm]{figures/subplot_metric_1.pdf}
% \caption{Sync-C$\uparrow$}\label{fig:sync-c}
% \end{subfigure}\hfill
% \begin{subfigure}[t]{0.24\linewidth}
% \centering\includegraphics[width=\linewidth, height=3.5cm]{figures/subplot_metric_2.pdf}
% \caption{Sync-D$\downarrow$}\label{fig:sync-d}
% \end{subfigure}\hfill
% \begin{subfigure}[t]{0.21\linewidth}
% \centering\includegraphics[width=\linewidth, height=3.5cm]{figures/subplot_metric_3.pdf}
% \caption{CSIM$\uparrow$}\label{fig:csim}
% \end{subfigure}\hfill
% \begin{subfigure}[t]{0.195\linewidth}
% \centering\includegraphics[width=\linewidth, height=3.5cm]{figures/subplot_metric_4.pdf}
% \caption{FID$\downarrow$}\label{fig:fid}
% \end{subfigure}
% \caption{Evaluation of the aggregation window size $K$ across five audio-driven talking-head models using multiple metrics. Red dots indicate the baseline without TT-SAC ($K=0$), while blue dots show performance for different $K$ values ($K=1,2,3,5,10$) with TT-SAC. The plots demonstrate that TT-SAC consistently improves all metrics, with optimal performance typically achieved at smaller $K$ values, balancing temporal stability and identity preservation.
% }
% \label{fig:hyper-eval}
% \end{figure*}

\textbf{Video-level fidelity.} FVD captures holistic video quality, encompassing both temporal and perceptual aspects. Across all generators, TT-SAC consistently reduces FVD, indicating improved video realism. AniTalker on Hallo decreases from 143.40 to 121.77 with identity-only TT-SAC and further to 85.11 with motion-aware TT-SAC. Sonic on RAVDESS sees nearly a 50\% improvement, from 63.19 to 36.08. These results demonstrate that TT-SAC’s two-pass refinement reduces cumulative errors and stabilizes the generated sequence. Extending the refinement to motion pathways amplifies these gains, especially in datasets with expressive facial dynamics, highlighting the model-agnostic nature of the approach.

\subsection{Discussion}

\textbf{Dataset- and model-specific observations.} On Hallo, TT-SAC achieves the largest gains in LPIPS and CSIM, reflecting its effectiveness in stabilizing highly expressive motions that cause identity drift. CelebV-HQ shows mixed trade-offs between motion and identity: for JoyVASA, motion refinement slightly reduces identity while improving temporal smoothness, highlighting the importance of selecting an appropriate aggregation window $K$. On RAVDESS, improvements are uniformly strong across all metrics, as the controlled emotional expressions create highly correlated frames, which TT-SAC exploits for variance reduction. FLOAT and Sonic show more modest lip-sync improvements, consistent with their strong baseline performance, yet they still benefit from reduced frame-to-frame variance.

% \textbf{Effect of aggregation window size.}
% Fig.~\ref{fig:hyper-eval} reports performance change relative to the baseline ($\Delta$), where the dashed horizontal line at $0$ corresponds to standard inference without TT-SAC ($K=0$). For metrics with $\uparrow$ (Sync-C, CSIM), positive values indicate improvement, whereas for metrics with $\downarrow$ (Sync-D, FID), negative values indicate improvement.

% Across models, introducing TT-SAC ($K\!\ge\! 1$) generally shifts performance away from zero in the favorable direction. For Sync-C and CSIM, most models exhibit positive gains at small $K$, indicating improved lip-audio alignment and stronger identity consistency. For Sync-D and FID, improvements appear as negative deltas, particularly at moderate $K$, reflecting reduced lip discrepancy and enhanced perceptual quality.
% % 
% Notably, smaller aggregation windows ($K\!=\!1$ or $2$) often yield the largest improvements relative to baseline. As $K$ increases, gains tend to saturate or partially diminish for some models, suggesting that excessive aggregation may introduce bias by over-smoothing dynamic facial variations. This trend aligns with the bias-variance trade-off implied by our theoretical analysis: moderate $K$ effectively reduces stochastic variance while preserving expressive detail.

\textbf{Effect of the number of aggregated frames.} Fig. \ref{fig:hyper-eval} reports performance change relative to the baseline ($\Delta$), where the dashed horizontal line at $0$ corresponds to standard inference without TT-SAC ($K=0$). 
For metrics with $\uparrow$ (Sync-C, CSIM), higher values indicate improvement, whereas for $\downarrow$ (Sync-D, FID), lower values indicate improvement.

Across models, enabling TT-SAC ($K\!\ge\!1$) generally shifts performance in the favorable direction. 
Small values of $K$ consistently improve lip-audio synchronization (Sync-C) and identity similarity (CSIM), while reducing lip discrepancy (Sync-D) and perceptual error (FID).
The largest gains typically occur at small $K$ (\eg, $K\!=\!1$ or $2$). 
As $K$ increases, improvements tend to saturate or slightly diminish, suggesting that excessive aggregation may over-smooth dynamic facial variations. 
This behavior is consistent with the bias-variance trade-off discussed in our theoretical analysis.

% Overall, these results validate the robustness of TT-SAC across models and datasets, and provide practical guidance for selecting $K$ to balance temporal coherence and expressiveness.

% \textbf{Motion pathway refinement.} To assess the effect of motion refinement in TT-SAC, we compare identity-only conditioning refinement (+ TT-SAC) with the motion-aware variant (+ TT-SAC (w/ motion)), which additionally aggregates motion-related pathways (\eg, keypoint streams or flow fields) when supported by the model.
% The impact of motion refinement is model- and dataset-dependent. On expressive datasets such as Hallo and CelebV-HQ, incorporating motion often yields further improvements in perceptual realism (FID/FVD) and distortion (LPIPS), as seen for AniTalker and FLOAT. However, gains in lip-audio synchronization (Sync-C/Sync-D) and temporal smoothness (Smooth) are not uniform, and in some cases identity-only refinement achieves stronger alignment metrics. 
% Identity consistency (CSIM) is generally preserved or improved relative to baseline, though motion-aware refinement can introduce minor trade-offs for certain model-dataset combinations (\eg, slight CSIM reductions compared to identity-only refinement). 

\begin{figure*}[tbp]
\centering
% \vspace{-0.5cm}
\subfloat[Human half-body]{%
\includegraphics[width=0.44\linewidth]{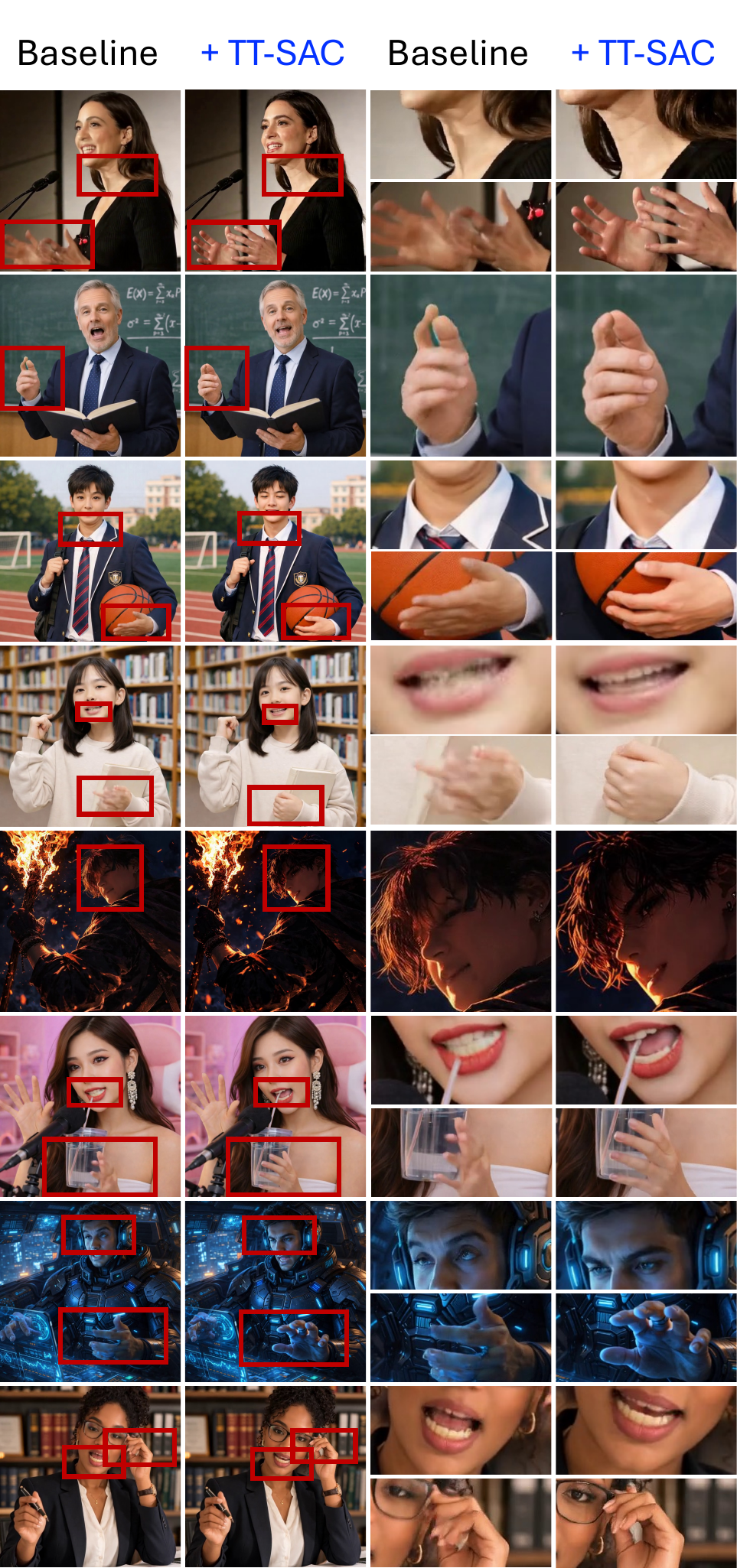}
}
\hfill
\subfloat[Avatar]{%
\includegraphics[width=0.44\linewidth]{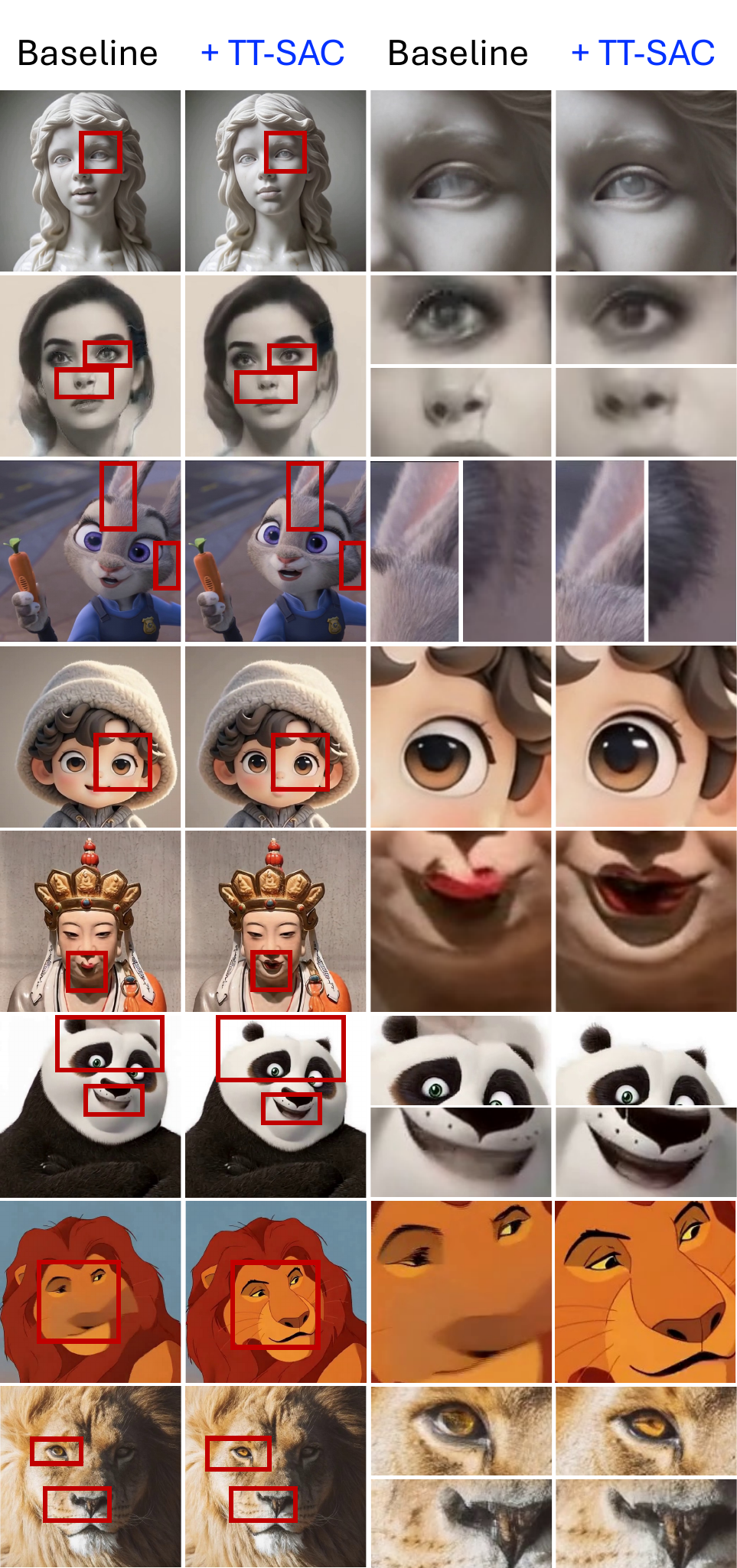}
}
\caption{Generalization to other audio-driven video synthesis tasks in OmniAvatar \cite{gan2025omniavatar}. % The top row shows baseline results, while the bottom row shows results with TT-SAC. 
Our method improves motion coherence and enhances eye dynamics and fine-grained details in both human half-body generation and avatar animation tasks, demonstrating strong cross-task generalization.}
\label{fig:two_compare}
\end{figure*}

\textbf{Motion pathway refinement.} We further analyze the effect of incorporating motion cues in TT-SAC by comparing identity-only refinement (+TT-SAC) with the motion-aware variant (+TT-SAC (w/ motion)), which additionally aggregates motion-related pathways (\eg, keypoint streams or motion fields).
Quantitatively, the impact of motion refinement varies across models and datasets. On highly expressive datasets such as Hallo and CelebV-HQ, incorporating motion often improves perceptual realism and temporal coherence, leading to better FID/FVD and LPIPS scores for models such as AniTalker and FLOAT. However, improvements in synchronization (Sync-C/Sync-D) and temporal smoothness (Smooth) are not always consistent, and identity-only TT-SAC occasionally achieves slightly stronger alignment metrics.
Qualitatively, motion-aware TT-SAC effectively suppresses temporal artifacts commonly observed in baseline models, including sudden facial shifts, frame-to-frame jitter, and structural instability (Fig. \ref{fig:main}). These improvements are particularly visible in regions with large motion dynamics, where identity-only TT-SAC already stabilizes facial appearance while motion-aware TT-SAC further reduces temporal deviations. Identity consistency (CSIM) remains largely preserved relative to baseline, although minor trade-offs can occur for certain model-dataset combinations.

\textbf{Failure modes and generality.} 
TT-SAC assumes that the initial generation provides sufficiently informative identity cues. 
If early frames exhibit severe identity degradation, the Monte Carlo estimate may inherit this bias. 
However, aggregation is performed in encoder feature space rather than pixel space, and identity-focused encoders tend to suppress transient artifacts, mitigating error amplification. 
In practice, the first few frames typically retain reliable identity information, making the estimator stable; extreme motion or severely degraded initial outputs may reduce the magnitude of improvement but do not destabilize the process.

\textbf{Applicability to other video generation tasks.} Our method is not tied to a specific talking-head architecture. Since TT-SAC operates purely on the conditioning representation at test time, it can be directly applied to other audio-driven video generation pipelines without modifying model parameters.
To demonstrate this generality, we apply TT-SAC to OmniAvatar \cite{gan2025omniavatar}, which supports more challenging synthesis scenarios such as human half-body generation and non-human avatar animation. As shown in Fig.~\ref{fig:two_compare}, TT-SAC consistently improves temporal stability and visual fidelity in both settings. In half-body synthesis, the method produces more coherent hand and body motion, while in avatar animation it enhances eye dynamics and fine structural details.
These results indicate that TT-SAC generalizes beyond face-only talking-head generation and can serve as a test-time conditioning adaptation for diverse audio-driven video synthesis tasks.

\section{Conclusion}

We presented TT-SAC, a conditioning-level test-time adaptation framework for audio-driven talking-head generation. Instead of relying solely on a static reference feature, TT-SAC performs feature aggregation and a single self-consistency update to project the conditioning representation toward a generator-encoder equilibrium.
Extensive experiments across multiple datasets and pretrained generators demonstrate consistent improvements in identity preservation, perceptual quality, and video-level realism, with dataset- and model-dependent effects on lip synchronization and temporal smoothness. Motion-aware TT-SAC further enhances stability for several architectures, while preserving alignment and identity in most settings.
Our theoretical analysis provides a principled interpretation of TT-SAC through variance reduction and local fixed-point contraction, explaining why small aggregation windows achieve strong empirical performance. Since TT-SAC requires no retraining, architectural modification, or gradient-based adaptation, it serves as a practical and broadly applicable test-time stabilization strategy.
We believe this work highlights conditioning self-consistency as a general mechanism for improving stability and coherence in generative video models.

% \section{Conclusion}

% We introduced TT-SAC, a principled inference-time framework that enhances audio-driven portrait animation by replacing the conventional static reference with a self-consistent, temporally adaptive one.  
% Through lightweight feature aggregation and a single feedback pass, TT-SAC significantly improves identity stability, expression coherence, and lip-audio alignment across diverse pretrained generators.  
% Our theoretical analysis establishes variance reduction, fixed-point convergence, and an optimal aggregation window, providing a unified explanation for the empirical gains.  
% Because TT-SAC requires no retraining, modifies no architecture, and incurs negligible computational overhead, it offers a practical drop-in correction applicable to diffusion-, flow-, and keypoint-based models.  
% We hope this work motivates further exploration of self-consistency as a general principle for stabilizing generative video models.

\bibliographystyle{IEEEtran}
\bibliography{research}

% \newpage

\vskip -1.5\baselineskip plus -1fil

\begin{IEEEbiography}[{\includegraphics[trim=40 35 40 16,width=1in,height=1.25in,clip,keepaspectratio]{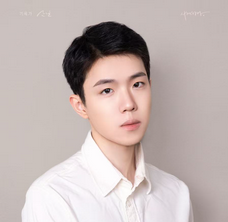}}]{Zhicheng Zhang} is a Ph.D. student at the University of New South Wales (UNSW), Australia, supervised by Dr. Yu Zhang (2024-present). He received his M.S. from The University of Queensland, Australia (2022-2023). He is currently a visiting scholar at the ARC Research Hub hosted by Griffith University, under the supervision of Dr. Lei Wang and Prof. Yongsheng Gao. He also serves as the workshop coordinator for the TIME 2026 workshop, organized as part of The Web Conference 2026 (WWW 2026). His research interests include talking-head generation, temporal modeling, and computer vision.
\end{IEEEbiography}

\vskip -1.5\baselineskip plus -1fil

\begin{IEEEbiography}
[{\includegraphics[width=1in,height=1.25in,clip,keepaspectratio]{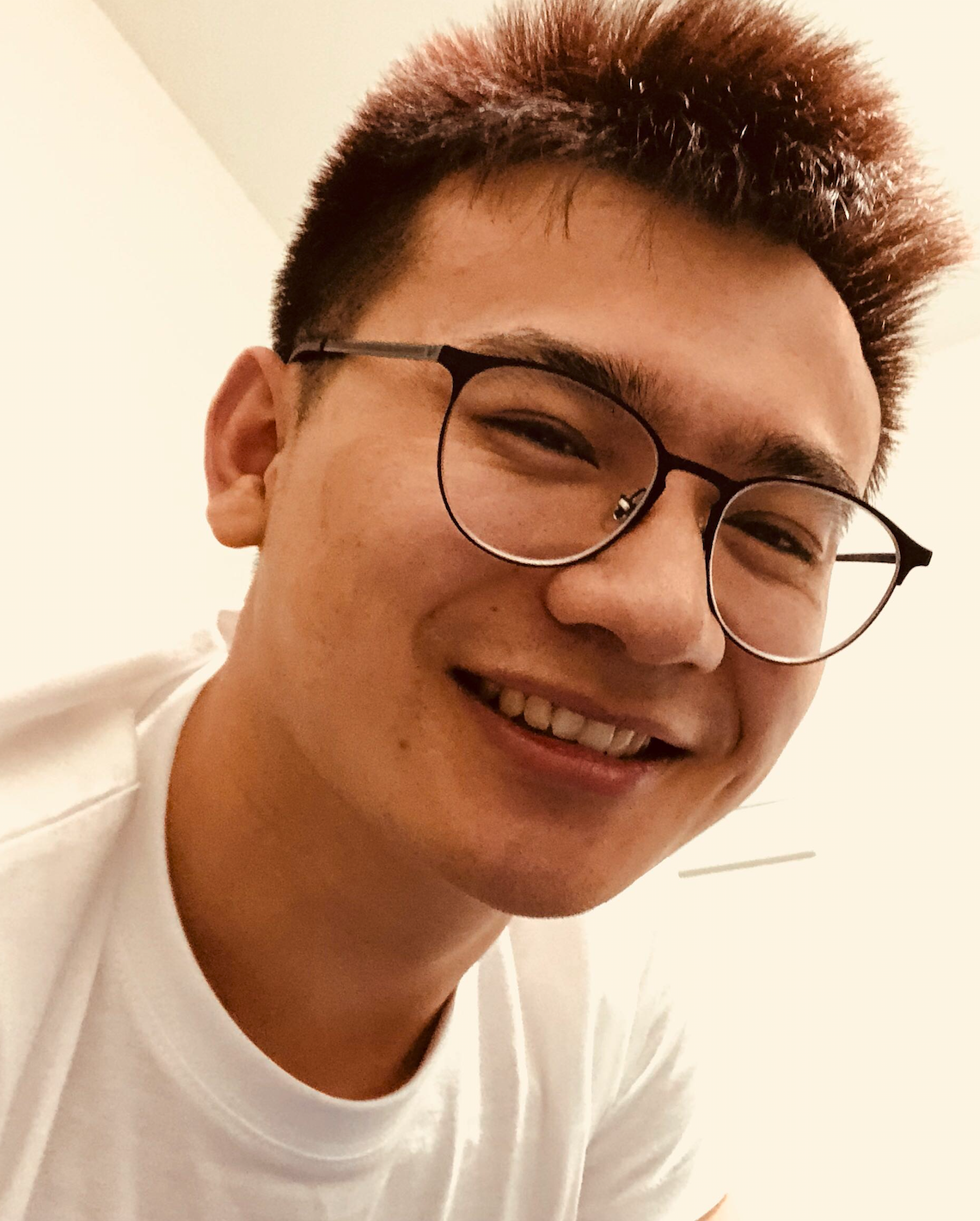}}]{Lei Wang} received his M.E. in Software Engineering from the University of Western Australia (UWA) in 2018 and his Ph.D. in Engineering and Computer Science from the Australian National University (ANU) in 2023. He is a Research Fellow in the School of Electrical and Electronic Engineering at Griffith University and a Visiting Scientist with Data61/CSIRO. He leads the Temporal Intelligence and Motion Extraction (TIME) Lab at Griffith University. He previously held research positions at ANU, UWA, and Data61/CSIRO. His research focuses on motion-, data-, and model-centric approaches to video action recognition and anomaly detection. He has authored numerous first-author papers in top-tier venues, including CVPR, ICCV, ECCV, ACM Multimedia, NeurIPS, ICLR, ICML, AAAI, TPAMI, IJCV, and TIP, and received the Sang Uk Lee Best Student Paper Award at ACCV 2022. He serves as an Area Chair for ACM Multimedia 2024-2025, ICASSP 2025, and ICPR 2024, and was recognized as an Outstanding Area Chair at ACM Multimedia 2024.
\end{IEEEbiography}

\vskip -1.5\baselineskip plus -1fil

\begin{IEEEbiography}[{\includegraphics[trim=140 190 110 35, width=1in,height=1.25in,clip,keepaspectratio]{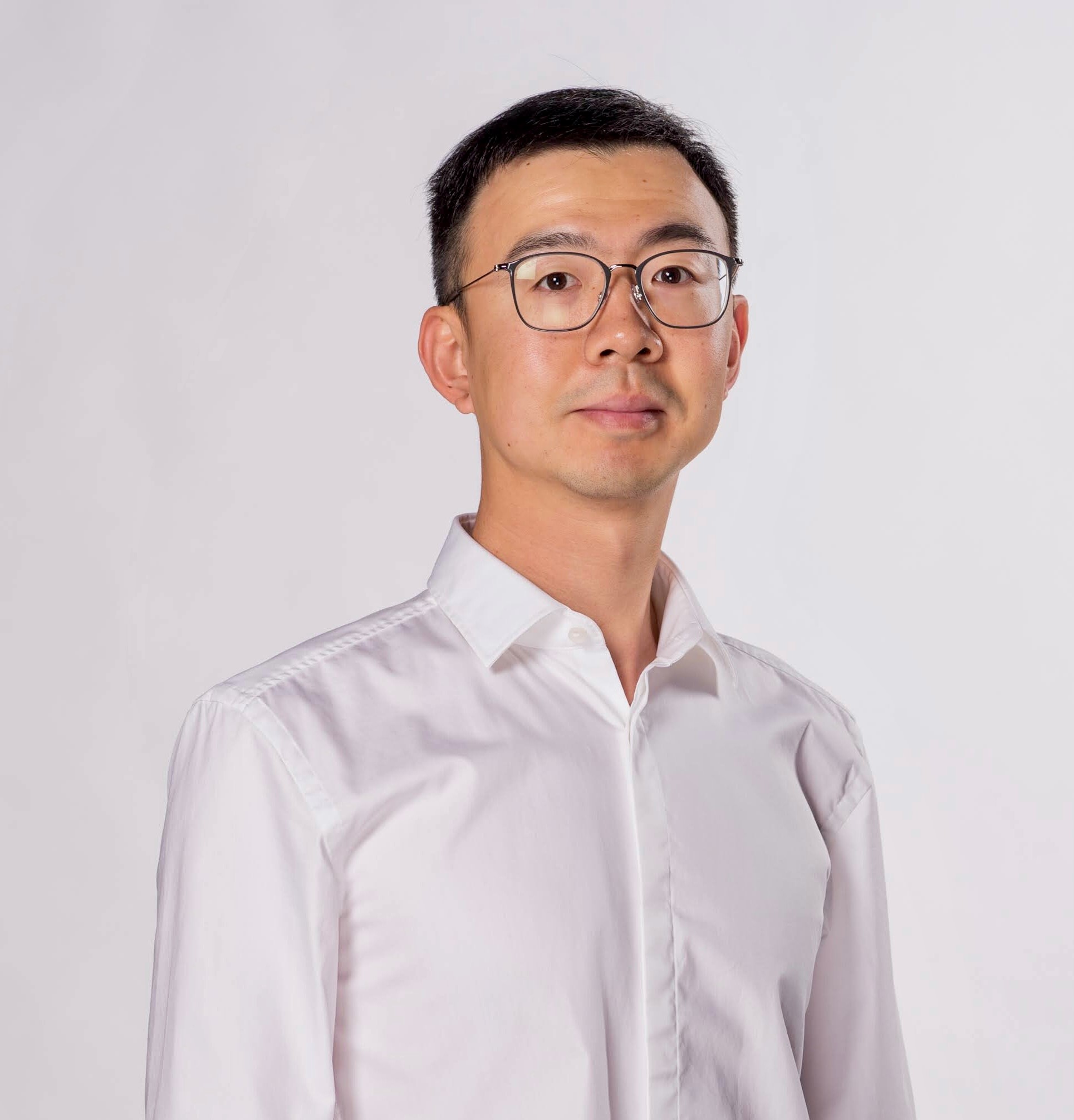}}]{Yu Zhang} is a Lecturer in Data Science at the School of Business, UNSW. His research focuses on machine learning for information and knowledge management, graph representation learning and heterogeneous network analysis, text and data mining for asset management, AI-driven industrial systems, sustainable logistics, and net-zero energy solutions. He has led and contributed to multiple government- and industry-funded projects on trustworthy AI for battery health monitoring, AI-enabled defense logistics, cyber threat intelligence, and hybrid energy systems for net-zero buildings. His work has been published in leading journals and conferences, including Information Processing \& Management, Journal of Informetrics, AAAI, CIKM, and PAKDD. He has received several Best/Excellent Paper Awards, including ADMA 2024, ICEBE 2024, and VICFCNT 2020.
\end{IEEEbiography}

\vskip -1.5\baselineskip plus -1fil

\begin{IEEEbiography}[{\includegraphics[trim=40 50 30 5, width=1in,height=1.25in,clip,keepaspectratio]{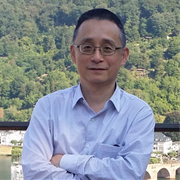}}]{Yongsheng Gao} received the BSc and MSc degrees in Electronic Engineering from Zhejiang University, China, in 1985 and 1988, respectively, and the PhD degree in Computer Engineering from Nanyang Technological University, Singapore. He is currently a Professor at the School of Engineering and Built Environment, Griffith University, and Director of the ARC Research Hub for Driving Farming Productivity and Disease Prevention, Australia. He was previously the Leader of the Biosecurity Group at the Queensland Research Laboratory, National ICT Australia (ARC Centre of Excellence), a consultant at Panasonic Singapore Laboratories, and an Assistant Professor at Nanyang Technological University. His research interests include smart farming, machine vision for agriculture, biosecurity, face recognition, biometrics, image retrieval, computer vision, pattern recognition, environmental informatics, and medical imaging. He is a recipient of the 2025 ARC Industry Laureate Fellow.\end{IEEEbiography}

\end{document}